\def\eqref#1{equation~\ref{#1}}
\def\1{\bm{1}}
\DeclareMathAlphabet{\mathsfit}{\encodingdefault}{\sfdefault}{m}{sl}
\SetMathAlphabet{\mathsfit}{bold}{\encodingdefault}{\sfdefault}{bx}{n}
\newcommand{\model}{\textbf{TA$\mathbb{CO}$}} 
\newcommand{\coarsen}{\textbf{RePro}}
\newcommand{\rew}[1]{}
\algnewcommand{\IIf}[1]{\State\algorithmicif\ #1\ \algorithmicthen}
\algnewcommand{\EndIIf}{\unskip\ \algorithmicend\ \algorithmicif}
\newenvironment{customthm}[1]
  {\innercustomthm}
  {\endinnercustomthm}
\newenvironment{customobs}[1]
  {\innercustomobs}
  {\endinnercustomobs}
\title{A Topology-aware Graph Coarsening Framework \\for Continual Graph Learning}
\author{
  Xiaoxue Han \\
  Computer Science Department \\
  Stevens Institute of Technology \\
  Hoboken, NJ\\
  \texttt{xhan26@stevens.edu} \\
  \And
  Zhuo Feng \\
  Electrical and Computer Engineering Department \\
  Stevens Institute of Technology \\
  Hoboken, NJ\\
  \texttt{zfeng12@stevens.edu} \\
   \And
  Yue Ning \\
  Computer Science Department \\
  Stevens Institute of Technology \\
  Hoboken, NJ\\
  \texttt{yue.ning@stevens.edu} \\
}
\begin{document}
\maketitle



\newcommand{\fix}{\marginpar{FIX}}
\newcommand{\new}{\marginpar{NEW}}

\newtheorem{property}{Property}[section]
\newtheorem{theorem}{Theorem}[section]
\newtheorem{observation}{Observation}[section]

\maketitle
\setstcolor{blue}
\begin{abstract}

Continual learning on graphs tackles the problem of training a graph neural network (GNN) where graph data arrive in a streaming fashion and the model tends to forget knowledge from previous tasks when updating with new data.
Traditional continual learning strategies such as Experience Replay can be adapted to streaming graphs, however, these methods often face challenges such as inefficiency in preserving graph topology and incapability of capturing the correlation between old and new tasks.
To address these challenges, we propose (\model), a \underline{t}opology-\underline{a}ware graph \underline{co}arsening and \underline{co}ntinual learning framework that stores information from previous tasks as a reduced graph. 
At each time period, this reduced graph expands by combining with a new graph and aligning shared nodes, and then it undergoes a ``zoom out'' process by reduction to maintain a stable size. 
We design a graph coarsening algorithm based on node representation proximities to efficiently reduce a graph and preserve topological information. We empirically demonstrate the learning process on the reduced graph can approximate that of the original graph.
Our experiments validate the effectiveness of the proposed framework on three real-world datasets using different backbone GNN models.

\end{abstract}
\section{Introduction}

Deep neural networks are known to be oblivious: they lack the ability to consider any pre-existing knowledge or context outside of the information they were trained on.  In offline settings,
this problem can be mitigated by making multiple passes through the dataset with batch training. However, in a continual learning setup (also known as incremental learning or lifelong learning)~\cite{Tang2020, Kirkpatrick2017, Rusu2016, Rolnick2019}, this problem becomes more intractable as the model has no access to previous data, 
resulting in drastic degradation of model performance on old tasks. 
A variety of methods have been proposed to tackle the issue of ``catastrophic forgetting''. However, most of these approaches are tailored for Euclidean data, such as images and texts, which limits their effectiveness on non-Euclidean data like graphs. 
In this paper, we investigate continual graph learning (CGL) frameworks to address the forgetting problems in downstream tasks such as node classification on streaming graphs. 
{Existing common CGL methods can be broadly categorized into \textit{regularization}-based methods, \textit{expansion}-based methods, and \textit{rehearsal}-based methods.}  \textit{Regularization}-based methods reserve old knowledge by adding regularization terms in training losses to penalize the change of model parameters~\cite{Liu2020} or the shift of node embeddings~\cite{Su2023}. However, they are generally vulnerable to the ``brittleness'' problem~\cite{Tang2020, Pan2020} that restrains the models' ability to learn new knowledge for new tasks. {Expansion-based methods~\cite{Zhang2023} that assign isolated model parameters to different tasks are inherently expensive. }
In this work, we consider \textit{rehearsal}-based methods~\cite{Rolnick2019, Chaudhry2018b, Chaudhry2019, Tang2020, LopezPaz2017} which use a limited extra memory to replay samples of old data to a learning model. Although there are several attempts~\cite{Kim2022, Zhou2020, Wei2022,Zhang2022} to use memory buffers for saving graph samples in the rehearsal process of online graph training. They often face challenges such as inefficiency in employing topological information of graph nodes and ineffectiveness in capturing inter-task correlations.

  In this work, we focus on preserving the topological information of graph nodes with efficient memory in continual learning frameworks to improve downstream task performance. Downstream tasks play a pivotal role in many real-world applications. {We notice that most existing research on CGL ~\cite{Liu2020, Zhou2020, Zhang2022} focuses on either \emph{task-incremental-learning (task-IL)}~\cite{gido2019} or a \emph{tranductive}~\cite{zhang2022b,Carta2021} setting}, where the sequential tasks are independent graphs containing nodes from non-overlapping class sets. In this setting, the model only needs to distinguish the classes included in the current task. For instance, if there are 10 classes in total, and this experimental setting divides these classes into 5 tasks. Task 1 focuses on classifying classes 1 and 2, while task 2 classifies classes 3 and 4, and so on. {In this paper, we aim to tackle a more realistic \emph{inductive} and \emph{Generalized Class-incremental-learning (generalized class-IL)}~\cite{Mi2020} setting}. Real-world graphs often form in an evolving manner, where nodes and edges are associated with a time stamp indicating their appearing time, and graphs keep expanding with new nodes and edges. For instance, in a citation network, each node representing a paper cites (forms an edge with) other papers when it is published. Each year more papers are published, and the citation graph also grows rapidly. In such cases, it is necessary to train a model incrementally and dynamically because saving or retraining the model on the full graph can be prohibitively expensive in space and time. So we split a streaming graph into subgraphs based on time periods and train a model on each subgraph sequentially for the node classification task. In such cases, subgraphs are correlated with each other through the edges connecting them (e.g. a paper cites another paper from previous time stamps). The structural information represented by edges may change from previous tasks (\emph{inductive}). Also, since the tasks are divided by time periods instead of class labels, new tasks could contain both old classes from previous tasks and new classes, so the model needs to perform classification across all classes (\emph{generalized class-IL}). {Unlike traditional class-incremental setting~\cite{gido2019}, where tasks often have entirely distinct class sets, \emph{generalized class-IL} allows same classes to be shared across tasks, and previously encountered classes may reappear in new tasks.}

\begin{wrapfigure}{r}{0.5\textwidth}
\begin{tabular}{cc}
  \includegraphics[height=28mm]{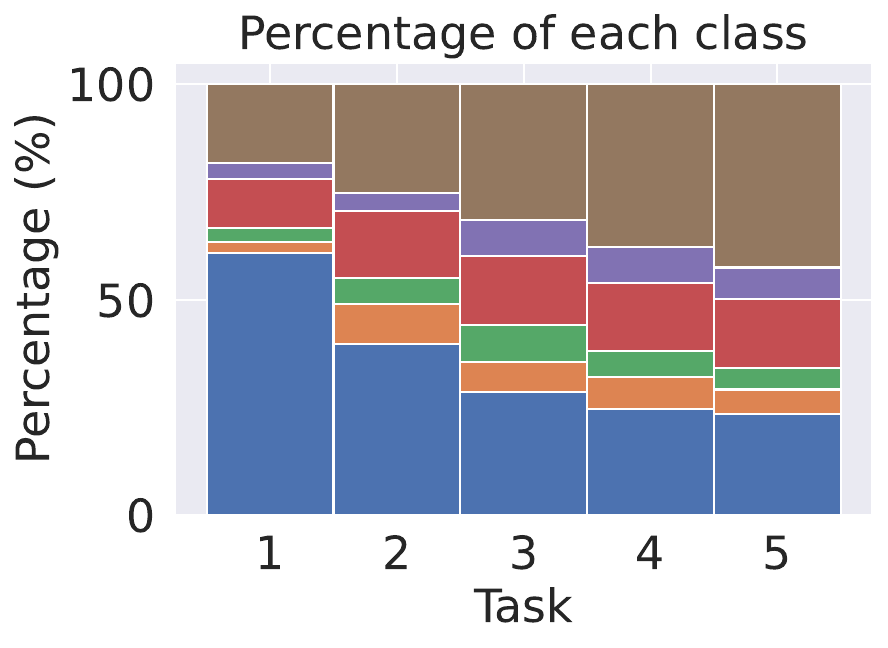}\label{fig:intro-dist} &
  \includegraphics[height=28mm]{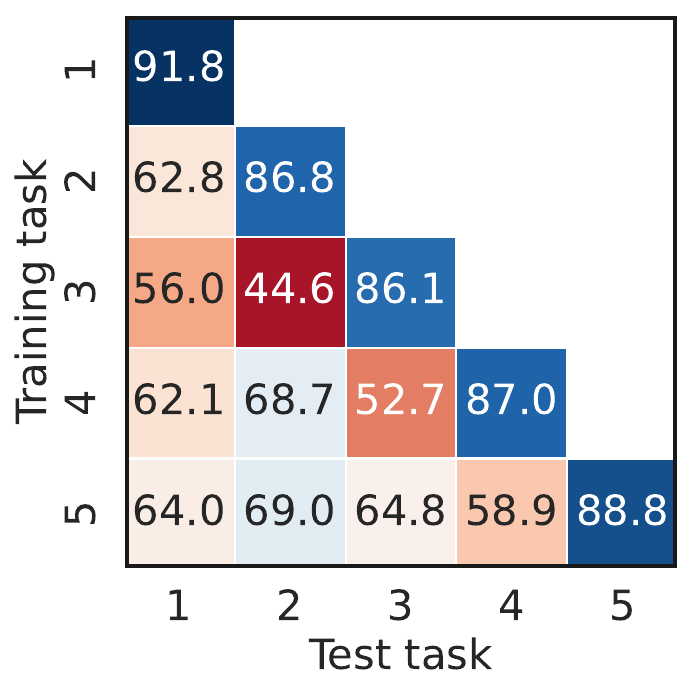} \label{fig:intro-f1}\\
  (a) Class distribution &(b) F1 score\\[2pt]
\end{tabular}
\caption{A motivating example on Kindle dataset}
\label{fig::intro}
\end{wrapfigure}

We further demonstrate the necessity of preserving old knowledge when learning on a streaming graph. We take the Kindle e-book co-purchasing network~\cite{He2016, Julian2015} as an example. We split the graph into 5 subgraphs based on the first appearing time of each node (i.e., an e-book). We observe a gradual shift of node class distribution over time, as shown in Figure~\ref{fig::intro}(a). 
Furthermore, even for nodes in the same class, their features and neighborhood patterns can shift~\cite{Kim2022}. Also, in real-life situations, tasks may have different class sets (e.g. new fields of study emerge and old fields of study become obsolete), which could exacerbate the forgetting problem. 
The F1 scores of node classification tasks using a Graph Convolutional Network (GCN)~\cite{Kipf2016} show that the model performs significantly worse on previous tasks when trained on new ones without any strategies to address the forgetting problem, as shown in Figure~\ref{fig::intro}(b). 

Motivated by these observations, we point out that despite the emergent research on Continual Graph Learning, two challenges still remain: 1) How to effectively and efficiently preserve the topological information of old tasks. 2) How to capture and take advantage of the correlation between old tasks and new tasks. In this work, we aim to address these two challenges by the following contributions.
\begin{itemize}[leftmargin=*,itemsep=1mm, parsep=0pt]
  \item \textbf{A realistic CGL scheme.} We consider a realistic continual learning paradigm on streaming graphs. In this online setting, the model (i.e., a GNN) is trained on a subgraph (i.e., task) comprising edges from the current time period. 
  
  \item \textbf{A dynamic CGL framework.} We propose a continual graph learning framework, \model{}, that is efficient in memory and highly modular.
  By compressing a streaming graph online, \model{} addresses the challenges of preserving topological information of nodes and taking into account the correlation between old and new tasks due to overlapping nodes among subgraphs. \model{} does not introduce new learnable parameters besides the backbone GNN model. 
  
  \item \textbf{A scalable graph reduction approach.} As a component of the CGL framework, we propose an efficient graph reduction algorithm, \coarsen{}, which utilizes both graph spectral properties and node features to efficiently reduce the sizes of input graphs. Additionally, we present a strategy, \emph{Node Fidelity Preservation}, to ensure that certain nodes are not compressed, thereby maintaining the quality of the reduced graph. We theoretically prove that Node Fidelity Preservation can mitigate the problem of vanishing minority classes in the process of graph reduction.
  
  \item  \textbf{Evaluation.} We conduct extensive experiments and perform comprehensive ablation studies to evaluate the effectiveness of \model{} and \coarsen{}. 
  We also compare our method with multiple state-of-the-art methods 
  for both CGL and graph coarsening tasks. 
\end{itemize}

\section{Related work}
\noindent\textbf{Catastrophic forgetting.}
\emph{Regularization}, \emph{Expansion}, and \emph{Rehearsal} are common approaches to overcome the catastrophic forgetting problem~\cite{Tang2020} in continual learning. \emph{Regularization} methods \cite{Kirkpatrick2017, Chaudhry2018, Zenke2017, Ebrahimi2019} penalize parameter changes that are considered important for previous tasks. Although this approach is efficient in space and computational resources, it suffers from the ``brittleness'' problem \cite{Tang2020, Pan2020} where the previously regularized parameters may become obsolete and unable to adapt to new tasks. \emph{Expansion}-based methods~\cite{Rusu2016, Yoon2017, Jerfel2018, Lee2020} assign isolated model parameters to different tasks and increase the model size when new tasks arrive. Such approaches are inherently expensive, especially when the number of tasks is large.
\emph{Rehearsal}-based methods consolidate old knowledge by replaying the model with past experiences. A common way is using a small episodic memory of previous data or generated samples from a generative model when it is trained on new tasks. While generative methods \cite{Achille2018, Caccia2019, Deja2021, Ostapenko2019} may use less space, they also struggle with catastrophic forgetting problems and over-complicated designs \cite{Parisi2018}. 
Experience replay-based methods~\cite{Rolnick2019, Chaudhry2018b, Chaudhry2019, Tang2020, LopezPaz2017}, on the other hand, have a more concise and straightforward workflow with remarkable performance demonstrated by various implementations with a small additional working memory. 

\noindent\textbf{Continual graph learning.} 
{Most existing CGL methods adapt \emph{regularization}, \emph{expansion}, or \emph{rehearsal} methods on graphs.} 
For instance, \cite{Liu2020} address catastrophic forgetting by penalizing the parameters that are crucial to both task-related objectives and topology-related ones. 
\cite{Su2023} mitigate the impact of the structure shift by minimizing the input distribution shift of nodes. 
Rehearsal-based methods~\cite{Kim2022, Zhou2020, Wei2022} keep a memory buffer to store old samples, which treat replayed samples as independent data points and fail to preserve their structural information. \cite{Zhang2022} preserve the topology information by storing a sparsified $L$-hop neighborhood of replay nodes. However, storing topology information of nodes through this method is not very efficient and the information of uncovered nodes is completely lost; also it fails to capture inter-task correlations in our setup. Besides, ~\cite{feng2023open} present an approach to address both the heterophily propagation issue and forgetting problem with a triad structure replay strategy: it regularizes the distance between the nodes in selected closed triads and open triads, which is hard to be categorized into any of the two approaches.
Some CGL work~\cite{Wang2021, Xu2020} focuses on graph-level classification where each sample (a graph) is independent of other samples, whereas our paper mainly tackles the problem of node classification where the interdependence of samples plays a pivotal role. 
It is worth noting that \textbf{dynamic graph learning} (DGL)~\cite {Wang2020, Polyzos2021, Ma2020, feng2020, Bielak2022, Galke2020} focuses on learning up-to-date representations on dynamic graphs instead of tackling the forgetting problem. They assume the model has access to previous information.  
The challenges explored in the DGL differ fundamentally from those addressed in our paper.
 



\noindent\textbf{Graph coarsening.}
Scalability is a major concern in graph learning.
Extensive studies aim to reduce the number of nodes in a graph, such that the coarsened graph approximates the original graph~\cite{Jin2020, Loukas2018, Chen2011, Livne2012}.
{In recent years, graph coarsening techniques are also applied for scalable graph representation learning~\cite{Liang2018, Deng2019, Fahrbach2020, Huang2021} and graph pooling~\cite{Pang2021, liu2023, Lee2019}}. 
Most graph coarsening methods~\cite{Jin2020, Loukas2018, Chen2011, Livne2012} aim to preserve certain spectral properties of graphs by merging nodes with high spectral similarity. However, such approaches usually result in high computational complexity especially when a graph needs to be repetitively reduced. Also, the aforementioned methods rely solely on graph structures but ignore node features. ~\cite{kumar2022} propose to preserve both spectral properties and node features. However, it models the two objectives as separate optimization terms, thus the efficiency problem from the spectral-based methods remains.
\section{Problem statement}
Our main objective is to construct a continual learning framework on streaming graphs to overcome the catastrophic forgetting problem. Suppose a GNN model is trained for sequential node classification tasks with no access to previous training data, but it can utilize a memory buffer with a limited capacity to store useful information. The goal is to optimize the prediction accuracy of a model on all tasks in the end by minimizing its forgetting of previously acquired knowledge when learning new tasks. 
In this work, we focus on time-stamped graphs, and the tasks are defined based on the time stamps of nodes in a graph. For each task, the GNN model is trained on a subgraph where source nodes belonging to a specified time period, and all tasks are ordered in time. Node attributes and class labels are only available if the nodes belong to the current time period. The aforementioned setup closely resembles real-life scenarios.
We formulate the above problems as follows.

\paragraph{Problem 1. Continual learning on time-stamped graphs} 
We are given a time-stamped expanding graph 
$\mathcal{G} = (\mathcal{V}, \mathcal{E}, A, X)$, 
where $\mathcal{V}$ denotes the node set, $\mathcal{E}$ denotes the edge set, 
$A\in \mathbb{R}^{|\mathcal{V}|\times |\mathcal{V}|}$ and $ X\in \mathbb{R}^{|\mathcal{V}|\times d_{X}}$ denote the adjacency matrix and node features, respectively; 
Each node $v \in \mathcal{V}$ is assigned to a time period $\tau(v)$.  We define a sequence of subgraphs, $\mathcal{G}_{1},...,\mathcal{G}_{k}$, such that each subgraph $\mathcal{G}_{t} = (\mathcal{V}_{t}, \mathcal{E}_{t}, A_t, X_t)$ from $\mathcal{G}$ based on the following rules:
\begin{itemize}[leftmargin=*,itemsep=0mm, parsep=0pt]
    \item For edges in $\mathcal{G}_{t}$:~
 $e = (s, o) \in \mathcal{E}_{t} \Leftrightarrow e \in \mathcal{E}  \mbox{~and~} \tau(s) = t$,
 \item For nodes in $\mathcal{G}_{t}$:~
  $ v \in \mathcal{V}_{t} \Leftrightarrow  \tau(v) = t \mbox{~or~} \left( (s, v) \in \mathcal{E} \mbox{~and~} \tau(s) = t \right)$,
\end{itemize}
where $s$ is a source node and $o$ (or $v$) is a target node.
We can assume $\tau(o) \leq \tau(s)$ for $(s,o) \in \mathcal{E}$ (e.g. in a citation network, a paper can not cite another paper published in the future). 

Under this setting, we implement a GNN to perform node classification tasks and sequentially train the model on $\mathcal{G}_{1},...,\mathcal{G}_{k}$. The objective is to optimize the overall performance of the model on all tasks when the model is incrementally trained with new tasks. Note that each taks corresponds to a time period and its subgraph  $\mathcal{G}_{t}$.
When the model is trained with a new task $T_t$, it has no access to $\mathcal{G}_{1},...,\mathcal{G}_{t-1}$ and $\mathcal{G}_{t+1},...,\mathcal{G}_{k}$.
However, a small memory buffer 
is allowed to preserve useful information from previous tasks. 

\paragraph{Problem 2. Graph coarsening}
Given a graph $\mathcal{G} = (\mathcal{V}, \mathcal{E})$ with $n = |\mathcal{V}|$ nodes, the goal of graph coarsening is to reduce it to a target size $n'$ with a specific ratio $\gamma$ where $n' = \lfloor \gamma \cdot n \rfloor$, $0<\gamma<1$. We construct the coarsened graph $\mathcal{G}^r = (\mathcal{V}^r, \mathcal{E}^r)$ through partitioning 
$\mathcal{V}$ to $n'$ disjoint clusters $(C_{1},..., C_{n'})$, so that each cluster becomes a node in $\mathcal{G}_r$. 
The construction of these clusters (i.e., the partition of a graph) depends on coarsening strategies. 
The node partitioning/clustering information can  be represented by a matrix $Q \in \mathbb{B}^{n \times n'}$. 
If we assign every node $i$ in cluster $C_j$ with the same weight, then $Q_{ij} = 1$; 
If node $i$ is not assigned to cluster $C_j$, $Q_{ij} = 0$. 
Let $c_j$ be the number of node in $C_j$ and $C = diag(c_1,...,c_n')$.
The normalized version of $Q$ is $P= QC^{1/2}$. It is easy to prove $P$ has orthogonal columns ($PP^{-1} = I$), and ${P}_{ij} = 1/{\sqrt{c_{j}}}$ if node $i$ belongs to $C_j$; ${P}_{ij} = 0$ if node $i$ does not belong to $C_j$. The detailed coarsening algorithm will be discussed in the next section.

\section{Methodology}
We propose \model{} in a continual learning setting to consolidate knowledge learned from proceeding tasks by replaying previous ``experiences'' to the model. We observe that the majority of experience replay methods, including those tailored for GNN, do not adequately maintain the intricate graph topological properties from previous tasks. Moreover, in a streaming graph setup they fail to capture the inter-dependencies between tasks that result from the presence of overlapping nodes, which can be essential for capturing the dynamic ``receptive field'' (neighborhood) of nodes and improving the performance on both new and old tasks~\cite{Kim2022}. 
To overcome these limitations, we design a new replay method that preserves both the node attributes and graph topology from previous tasks. 
{Our intuition is that, if we store the original graphs from the old task, minimal old knowledge would be lost, but it is also exceedingly inefficient and goes against the initial intention of continual learning. Thus, as an alternative, we coarsen the original graphs to a much smaller size which preserves the important properties (such as node features and graph topologies) of the original graphs.}
We propose an efficient graph coarsening algorithm based on Node \textbf{Re}presentation \textbf{Pro}ximity as a key component of \model{}. Additionally, we develop a strategy called \textit{Node Fidelity Preservation} for selecting representative nodes to retain high-quality information. An overview of the proposed framework is provided in Figure \ref{fig:overview}. We provide the pseudo-code of \model{} and \coarsen{} in Appendix~\ref{pseudo}. We also provide the source code in Supplemental Materials.  

\begin{figure*}[t]
\centerline{
\includegraphics[scale=0.62]{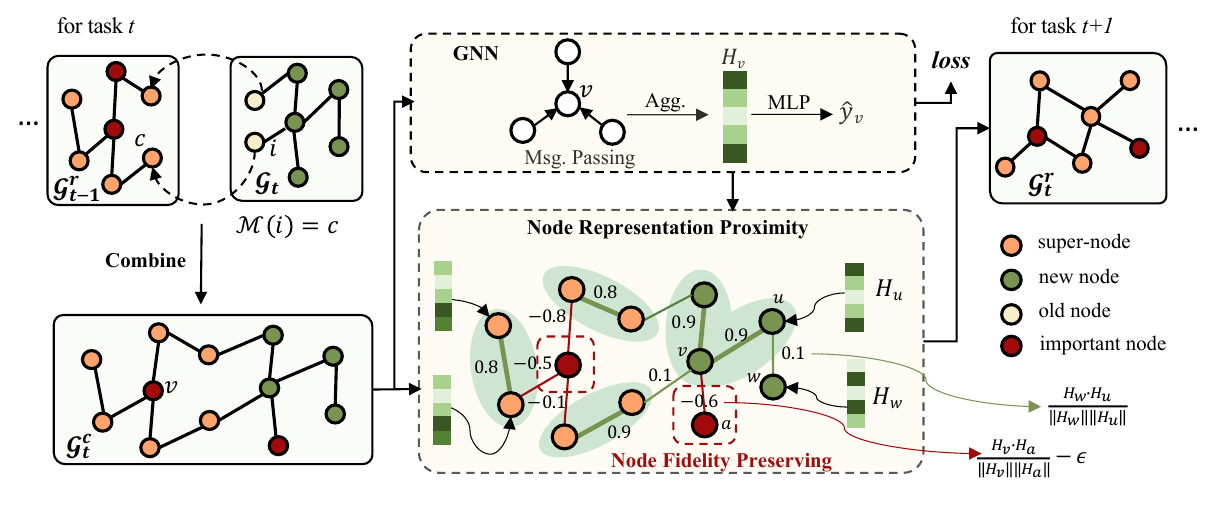}
}
\caption{An overview of \model. At $t$-th time period, the model takes in the coarsened graph $\mathcal{G}^r_{t-1}$ from the last time period and the original graph $\mathcal{G}_{t}$ from the current time period, and combine them into $\mathcal{G}^{c}_{t}$; for the same time period, the selected important node set is updated with the new nodes; the model is then trained on $\mathcal{G}^{c}_{t}$ with both the new nodes and the super-nodes from the past; finally $\mathcal{G}^{c}_{t}$ is coarsened to $\mathcal{G}^{r}_{t}$ for the next time period.}

\label{fig:overview}
\end{figure*} 

\paragraph{Overall framework}
We summarize the procedure of our framework as three steps: \emph{combine}, \emph{reduce}, and \emph{generate}.  At task $t$, we \emph{combine} the new graph $\mathcal{G}_t$ with the reduced graph $\mathcal{G}^r_{t-1}$ from the last task. Then we \emph{reduce} the combined graph $\mathcal{G}^c_{t}$ to a set of clusters. At last, we \emph{generate} the contributions of nodes in each cluster to form a super-node in the reduced graph $\mathcal{G}^r_{t}$.  The last step decides the new node features and the adjacency matrix of the reduced graph. We convey the details of each step below.

(1) \emph{\bf Combine: }
We use $\mathcal{M}$ (e.g., a hash table) to denote the mapping of each original node to its assigned cluster (super-node) in a reduced graph $\mathcal{G}^r$.
In the beginning, we initialize $\mathcal{G}^r_0$ as an empty undirected graph and $\mathcal{M}_0$ as an empty hash table. 
At task $t$, the model holds copies of $\mathcal{G}^r_{t-1}$, $\mathcal{M}_{t-1}$ and an original graph $\mathcal{G}_{t}$ for the current task. 
$\mathcal{G}_{t}$ contains both new nodes from task $t$ and old nodes that have appeared in previous tasks. 
We first ``combine'' $\mathcal{G}_{t}$ with $\mathcal{G}^r_{t-1}$ to form a new graph $\mathcal{G}^c_{t}$ by aligning the co-existing nodes in $\mathcal{G}_{t}$ and $\mathcal{G}^r_{t-1}$ and growing $\mathcal{G}^r_{t-1}$ with the new nodes appeared in $\mathcal{G}_{t}$. {By doing so, we connect the old graph and the new graph, and capture the inter-correlation among tasks.}
We train the model $f$ to perform node classification tasks on the combined graph $\mathcal{G}^c_{t} = ({A}^c_{t}, {X}^c_{t}, {Y}^c_{t})$ with the objective $\arg \min_{\theta} {\ell(f ({A}^c_{t}, {X}^c_{t}, \theta), {Y}^c_{t})}$,
where $f$ is a $L$-layer GNN model (e.g. GCN), $\theta$ denotes trainable parameters of the model, and $\ell$ is the loss function. 
In this work, each new node and old node in $\mathcal{G}^c_{t}$ contribute equally to the loss during the training process. However, it remains an option to assign distinct weights to these nodes to ensure a balance between learning new information and consolidating old knowledge. We describe a more detailed process in Appendix~\ref{appendix:framework}.

(2) \emph{\bf Reduce: } We decide how nodes are grouped into clusters and each cluster forms a new super-node in the reduced graph. 
We propose an efficient graph coarsening method, \coarsen{}, by leveraging the \textbf{Re}presentation \textbf{Pro}ximities of nodes to reduce the size of a graph through merging ``similar'' nodes to a super-node. Node representations are automatically learned via GNN models without extra computing processes. We think two nodes are deemed similar based on three factors: 1) \emph{feature similarity}, which evaluates the closeness of two nodes based on their features; 2) \emph{neighbor similarity}, which evaluates two nodes based on their neighborhood characteristics; 3) \emph{geometry closeness}, which measures the distance between two nodes in a graph (e.g., the length of the shortest path between them). Existing graph coarsening methods concentrate on preserving spectral properties, only taking graph structures into account and disregarding node features. However, estimating spectral similarity between two nodes is typically time-consuming, even with approximation algorithms, making it less scalable for our applications where graphs are dynamically expanded and coarsened. Thus, we aim to develop a more time-efficient algorithm that considers the aforementioned similarity measures. 

To get started, we have the combined graph 
$\mathcal{G}^{c}_t$ to be coarsened. We train a GNN model with $L$ ($L=2$ in our case) layers on $\mathcal{G}^{c}_t$, such that the node embedding of $\mathcal{G}^{c}_t$ at the first layer (before the activation function) is denoted by 
\begin{equation}
    H \in \mathbb{R}^{n^c_t \times d^h}= \mathrm{GNN}^{(1)}({A}^c_{t}, {X}^c_{t}, \theta),
    \label{eq::emb}
\end{equation}
where $d^h$ is the size of the first hidden layer in GNN. The similarity between every connected node pair $(u, v) = e \in \mathcal{E}^c_t$ is calculated based on cosine similarity as 
$
    \beta(e)= \frac{H_u \cdot H_v}{\|H_u\| \|H_v\|},
$
where $\beta(e)$ is the similarity score between the two end nodes of the edge $e$, $H_i$ is the embedding for node $i$, and $\|\cdot\|$ is the second norm of a vector. 
We then sort all edges of $\mathcal{G}^{c}_t$ such that 
$
    \beta(e_1) \geq \beta(e_2) \geq ... \geq \beta(e_{m^c_t})$,
and we recursively merge ({assign the nodes to the same cluster}) the two end nodes of the edges that have the largest similarity scores, until the target size is reached.

\noindent{\textbf{\textit{Complexity}}}{  } The time complexity of such an approach is proportional to the number of levels which can be large, especially when the graph is sparse or the target size is small. In our algorithm, we relax this constraint and allow one node to be merged multiple times continuously.
The time complexity of our method is $\mathcal{O} (d^{h}\cdot m^c_t)$ where $m^c_t$ is the number of edges in the current graph $\mathcal{G}_t^c$.

\noindent{\textbf{\textit{Node Fidelity Preservation}}}{  }
After multiple rounds of coarsening, the quality of a graph deteriorates as its node features and labels are repeatedly processed. 
Furthermore, the use of a majority vote to determine the label of a cluster can lead to the gradual loss of minority classes and cause a ``vanishing minority class'' problem. 

\begin{theorem}

Consider $n$ nodes with $c$ classes, such that the class distribution of all nodes is represented by $\mathbf{p} =p_1, p_2, ..., p_c$, where  $\sum_{i=1}^{c} p_i = 1$. If these nodes are randomly partitioned into $n'$ clusters such that $n' = \lfloor \gamma \cdot n \rfloor$, $0<\gamma<1$ and the class label for each cluster is determined via majority voting. The class distribution of all the clusters is 
$\mathbf{p}' = p'_1, p'_2, ... , p'_c$ where $\mathbf{p}'_i$ is computed as the ratio of clusters labeled as class $i$ and $\sum_{i=1}^{c} p'_i = 1$.
Let $k$ be one of the classes, and the rest of the class are balanced $p_1 =...= p_{k-1} = p_{k+1}=... = p_c$.
It holds that:

1. If $p_k=1/c$ and all classes are balanced $p_1 = p_2 = ... = p_c$, then $\mathbb{E}[p'_k] = p_k$.

2. 
When $p_k<1/c$, $\mathbb{E}[p'_k]<p_k$, and 
$\mathbb{E}[\frac{p'_k}{p_k}]$ decreases as $n'$ decreases. There exists a $p^\text{min}$ such that $0<p^\text{min}<1$, and when $p_k<p^\text{min}$, $\mathbb{E}[\frac{p'_k}{p_k}]$ decrease as $p_k$ decreases. 
\label{theorem::npf}
\end{theorem}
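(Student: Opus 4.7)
The plan is to model the random partition by assuming each of the $n'$ clusters has size $s = n/n'$ and that the class counts within a cluster are multinomial with parameters $(s,\mathbf{p})$; this is exact if the partition is formed by i.i.d.\ cluster assignment and closely approximates a uniform random partition for large $n$. Let $P_k(s,\mathbf{p})$ denote the probability that class $k$ strictly exceeds every other class's count in a single cluster, plus a symmetric share of the tie events. Then by linearity of expectation over the $n'$ identically distributed clusters, $\mathbb{E}[p'_k] = P_k(s,\mathbf{p})$, so all three statements reduce to statements about the single-cluster probability $P_k$.

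Part 1 is a pure symmetry argument: when $p_1 = \cdots = p_c = 1/c$, the multinomial law is invariant under any permutation of class labels, so $P_k$ takes the same value for every $k$; since the $P_k$'s sum to $1$, each equals $1/c = p_k$.

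For the first half of Part 2, write $q = (1-p_k)/(c-1)$ so every non-$k$ class has common probability $q > p_k$. Pairwise, the class-$k$ count $X_k \sim \mathrm{Binomial}(s,p_k)$ is stochastically dominated by each $X_j \sim \mathrm{Binomial}(s,q)$, and combined with the symmetry among the $c-1$ non-$k$ classes this yields $P_k < 1/c$. To upgrade this to $P_k < p_k$ I would exploit that the gap $q - p_k > 0$ forces a positive-probability event on which $X_k$ is below its mean while some $X_j$ is above its (larger) mean, providing the slack needed to push $P_k$ strictly below $p_k$ for $s \ge 2$. Monotonicity of $P_k$ in $s$ (equivalently in $n' = n/s$) is then handled by a coupling between clusters of sizes $s$ and $s+1$: adjoining one extra random node, the added node has expected non-$k$ share $1-p_k > 1/2$ in this regime and so can only tip the plurality away from class $k$.

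The subtlest step is the last sub-claim, the existence of $p^{\min}$ below which $\mathbb{E}[p'_k/p_k] = P_k/p_k$ strictly increases with $p_k$. I would regard $P_k(s,\mathbf{p})$ as a polynomial in $p_k$ (through $q = (1-p_k)/(c-1)$) and observe combinatorially that for class $k$ to win a cluster of size $s \ge 2$ the count $X_k$ must be at least $m_{\min} := \lceil (s+c-1)/c \rceil \ge 2$, since otherwise the remaining $s - X_k$ non-$k$ labels cannot all fit in $c-1$ classes while staying strictly below $X_k$. Hence the multinomial expansion of $P_k$ contains only monomials $p_k^m$ with $m \ge m_{\min} \ge 2$, giving
\begin{equation*}
P_k(s,\mathbf{p}) \;=\; c_{m_{\min}}\, p_k^{m_{\min}} + O\bigl(p_k^{m_{\min}+1}\bigr), \qquad c_{m_{\min}} > 0,
\end{equation*}
so $P_k/p_k = c_{m_{\min}} p_k^{m_{\min}-1} + O(p_k^{m_{\min}})$ is strictly increasing on a right neighborhood of $0$; taking $p^{\min}$ to be the supremum of such a neighborhood produces the required threshold. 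The main obstacle I foresee is not any single inequality but making the strict inequalities uniform across all admissible $s$ while handling the tie-breaking convention cleanly; the coupling approach above avoids an explicit combinatorial manipulation of the multinomial coefficients, and I expect only the existence of $p^{\min}$ to be provable rather than an explicit closed form.
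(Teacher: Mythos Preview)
Your approach diverges from the paper's in two structural ways, and one of your key steps contains a concrete error.

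\textbf{Modeling.} The paper does \emph{not} take all clusters to have equal size $s=n/n'$. It places one node in each of the $n'$ clusters and distributes the remaining $n-n'$ nodes uniformly at random, so cluster sizes are random; $\mathbb{E}[p'_k]$ is then written as $\frac{1}{n'}\sum_a \mathbb{E}[n_a]\,q(a,\mathbf p)$ with $q(a,\mathbf p)$ the single-cluster win probability at size $a$. Part~1 is the same symmetry argument you give. For monotonicity in $n'$ the paper argues via the shape of the cluster-size distribution rather than by your add-one-node coupling. Note also that under your fixed-size model the strict inequality $P_k<p_k$ simply fails for $s\le 2$ (for $s=2$ one checks $P_k=p_k^2+p_k(1-p_k)=p_k$ for every $c$), so your stochastic-dominance-plus-slack sketch cannot work uniformly in $s$; the paper's variable-size model avoids this because clusters of size $\ge 3$ carry positive weight.

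\textbf{The $m_{\min}$ step is wrong.} Your argument for the existence of $p^{\min}$ rests on the claim that every monomial in $P_k$ has $p_k$-degree at least $\lceil (s+c-1)/c\rceil\ge 2$. This is the threshold for a \emph{strict} plurality; once tie-breaking is included, $X_k\ge\lceil s/c\rceil$ suffices, which can equal $1$. Concretely, for $s=3$, $c=3$, $q=(1-p_k)/2$, the three-way tie $(1,1,1)$ contributes, and a direct multinomial count gives
\[
P_k \;=\; p_k^3 + 6p_k^2 q + 2p_k q^2 \;=\; \tfrac{1}{2}\,p_k + 2p_k^2 - \tfrac{3}{2}\,p_k^3,
\]
so $P_k/p_k\to\tfrac12$ as $p_k\to 0$, not $0$. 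Your leading-term expansion therefore does not hold, and monotonicity of $P_k/p_k$ near $0$ cannot be read off from it. The paper instead differentiates each multinomial term $f(\mathbf i,a,\mathbf p)/p_k$ directly, factoring it as $u\cdot p_k^{\theta}(1-p_k)^{\phi}(p_k+c-2)^{\psi}$ with $\theta=i_1-1$; for $a>1$ the sign of the derivative is governed by a concave quadratic $v(p_k)$ with $v(0)\ge 0$ and $v(1)<0$, producing a root $p_1^0(\mathbf i,a)\in(0,1)$ below which the derivative is positive, and $p^{\min}$ is taken as the minimum of these roots. That argument handles the tie terms with $i_1=1$ on the same footing as all others, which is exactly where your approach breaks.
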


The proof of Theorem \ref{theorem::npf} is provided in Appendix~\ref{proof-fidelity}.  
Theorem \ref{theorem::npf} shows that as the ratio of a class decreases, its decline becomes more severe when the graph is reduced. Eventually, the class may even disappear entirely from the resulting graph. 
To combat these issues, we suggest preserving representative nodes in a ``replay buffer'' denoted as $\mathcal{V}^{rb}_t$. We adopt three strategies from \cite{Chaudhry2019} to select representative nodes, namely \textit{Reservoir Sampling}, \textit{Ring Buffer}, and \textit{Mean of Features}. The replay buffer has a fixed capacity and is updated as new nodes are added. During the coarsening process, we prevent the selected nodes in $\mathcal{V}^{rb}_t$ from being merged by imposing a penalty on the similarity score $\beta(e)$ such that
\begin{equation}
    \beta(e) = \frac{H_u \cdot H_v}{\|H_u\| \|H_v\|} - \mathbbm{1}( u\in\mathcal{V}^{rb}_t \mbox{ or } v\in\mathcal{V}^{rb}_t)\cdot \epsilon ,
    \label{emb-sim-2}
\end{equation}
where $\epsilon$ is a constant and $\epsilon>0$. It is worth noting that we do not remove the edges of these nodes from the list of candidates. Instead, we assign a high value to the penalty $\epsilon$. This is to prevent scenarios where these nodes play a critical role in connecting other nodes. Removing these nodes entirely may lead to the graph not being reduced to the desired size due to the elimination of important paths passing through them. We make the following observation:
\begin{observation}
Node Fidelity Preservation with buffer size $b$ can alleviate the declination of a minority class $k$ when $p_k$ decreases and $n'$ decreases, and prevent class $k$ from vanishing when $p_k$ is small.    
\end{observation}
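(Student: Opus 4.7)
The plan is to decompose the coarsened graph produced with Node Fidelity Preservation into two disjoint groups of clusters: (i) the $b$ clusters anchored by the preserved nodes in $\mathcal{V}^{rb}_t$, whose labels are fixed and equal to the preserved node's label (since the penalty $\epsilon$ in Eq.~(\ref{emb-sim-2}) prevents them from being merged into another cluster and thereby losing their label via majority vote); and (ii) the remaining $n' - b$ clusters formed from the $n - b$ non-preserved nodes, to which the hypothesis of Theorem~\ref{theorem::npf} applies directly. Let $b_k$ denote the number of preserved class-$k$ nodes, and write $p_k^{\text{rem}} = (n p_k - b_k)/(n - b)$ for the class-$k$ proportion among non-preserved nodes. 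With $\mathbb{E}[p'_{k,\text{rem}}]$ denoting the expected class-$k$ ratio from Theorem~\ref{theorem::npf} applied to the sub-problem of size $(n - b) \to (n' - b)$, the total expected ratio is
$$\mathbb{E}[p'_k] \;=\; \frac{b_k}{n'} \;+\; \frac{n' - b}{n'}\,\mathbb{E}[p'_{k,\text{rem}}].$$

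For the non-vanishing claim, I would argue that as long as the sampling strategy guarantees at least one class-$k$ representative in the buffer (i.e., $b_k \geq 1$), the first term is at least $1/n'$, so $\mathbb{E}[p'_k] \geq 1/n' > 0$ and the minimum value of $p'_k$ is bounded away from zero. The three policies used in \coarsen{} (\emph{Reservoir Sampling}, \emph{Ring Buffer}, \emph{Mean of Features}) all satisfy this once applied in a class-aware fashion, since they each maintain some positive occupancy for every class seen so far.

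For the alleviation claim, I would compare $\mathbb{E}[p'_k]$ with the unbuffered quantity $\mathbb{E}[p'_k]_{\text{no buf}}$ from Theorem~\ref{theorem::npf}. The cleanest way is to fix a stratified buffer so that $b_k = \lfloor b\,p_k \rfloor$, which yields $p_k^{\text{rem}} = p_k$ and reduces $\mathbb{E}[p'_k]$ to a convex combination of $p_k$ (the preserved contribution, which suffers no majority-vote collapse) and $\mathbb{E}[p'_{k,\text{rem}}]$ (the non-preserved contribution, whose decline is governed by Theorem~\ref{theorem::npf} but now with the strictly larger effective coarsening ratio $(n'-b)/(n-b) > n'/n$). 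Since Theorem~\ref{theorem::npf}.2 states that $\mathbb{E}[p'_k / p_k]$ decreases as $n'$ decreases, the effective ratio being larger implies $\mathbb{E}[p'_{k,\text{rem}}]/p_k \geq \mathbb{E}[p'_k]_{\text{no buf}}/p_k$; the preserved term $b_k/n' = b\,p_k/n'$ then adds a strictly positive gap. The same convex-combination argument, combined with the monotonicity of $\mathbb{E}[p'_k/p_k]$ in both $p_k$ and $n'$ from Theorem~\ref{theorem::npf}.2, gives that the ratio $\mathbb{E}[p'_k]/p_k$ with buffer decreases strictly more slowly than without it as $p_k$ or $n'$ shrinks.

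The main obstacle will be to make the comparison rigorous when the buffer is not exactly class-stratified: under \emph{Reservoir Sampling} the identity $b_k = b\,p_k$ only holds in expectation, and integer effects from the floor function can shift $p_k^{\text{rem}}$ slightly away from $p_k$, so one must carry through these shifts and show they do not overturn the inequality. I would handle this by taking expectations over the sampling process and invoking the concavity of the mapping $(p_k^{\text{rem}}, n'-b) \mapsto \mathbb{E}[p'_{k,\text{rem}}]$ implied by Theorem~\ref{theorem::npf}.2 to absorb the fluctuations, leaving the strict inequality intact. Since the statement is an observation rather than a theorem, a qualitative argument emphasizing the two mechanisms above (direct preservation via $b_k/n'$ and a milder effective coarsening ratio for the remainder) should suffice.
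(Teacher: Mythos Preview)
Your decomposition into (i) $b$ preserved singleton clusters and (ii) the remaining $n'-b$ clusters is exactly the mechanism the paper uses, and your non-vanishing argument (if $b_k\ge 1$ then $p'_k\ge 1/n'>0$) is correct and in fact more explicit than what the paper writes. The paper's own justification is far more informal than yours: it simply observes that for size-$1$ clusters the quantity $q(1,\mathbf p)/p_k$ computed in the proof of Theorem~\ref{theorem::npf} is independent of $p_k$, so the class ratio is preserved exactly among the $b$ held-out nodes, and then it defers the quantitative ``alleviation'' claim to numerical plots rather than proving an inequality.

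Your alleviation argument, however, contains a sign error that breaks one of the two mechanisms you invoke. You claim the remainder enjoys ``the strictly larger effective coarsening ratio $(n'-b)/(n-b)>n'/n$'', but since $n'<n$ the map $x\mapsto (n'-x)/(n-x)$ is strictly decreasing, so in fact $(n'-b)/(n-b)<n'/n$. The non-preserved nodes are therefore coarsened \emph{more} aggressively, not less, and by the monotonicity in Theorem~\ref{theorem::npf}.2 one expects $\mathbb{E}[p'_{k,\text{rem}}]/p_k \le \mathbb{E}[p'_k]_{\text{no buf}}/p_k$, the reverse of what you assert. Consequently your convex-combination comparison does not go through as written: the second term pulls the wrong way, and the alleviation must be carried entirely by the first term $b_k/n'$. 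A separate subtlety is that Theorem~\ref{theorem::npf} fixes $n$ and varies $n'$, so it does not directly compare the sub-problem $(n-b)\to(n'-b)$ against $(n)\to(n')$; you would need an additional argument (or a reformulation in terms of the ratio $\gamma$) to make that step rigorous. Given that the paper treats this as a qualitative observation backed by figures, the cleanest fix is to drop the second mechanism and argue only from the preserved singletons, which is all the paper itself does.
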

See Appendix~\ref{proof-fidelity} for further discussions.

Note that \coarsen{} does not require any additional parameters or training time despite relying on the learned node embeddings.
We train a GNN model on a combined graph at each time step for node classification tasks.
The node embeddings learned from the GNN model at different layers are representative of the nodes' neighborhoods.
We use this fact and propose to measure the similarity of two nodes based on the distance between their embedding vectors. However, it takes quadratic time to calculate the pair-wise distance among nodes, thus we make a constraint that only connected nodes can be merged. Since connectivity has also been used to estimate the geometry closeness of two nodes~\cite{Pan2010}, by doing so, we are able to cover the three similarity measures as well as reduce the time complexity to linear time in terms of the number of edges to calculate node similarities.  

Our proposed approach based on node representations seems to be distinct from spectral-based methods, but they share a similar core in terms of the preserving of graph spectral properties. See Appendix~\ref{proof-spectral} for more details.

(3) \emph{\bf Generate:} 
From the last step, we get the membership matrix $Q_t \in\mathbb{B}^{n^c_t \times n^r_t}$ where $n^c_t$ denotes the number of nodes in the combined graph, $n^r_t= \lfloor \gamma\cdot n^c_t \rfloor$ denotes the number of nodes in the coarsened graph and $\gamma$ is the coarsening ratio. $Q_t[i,j]=1$ denotes that node $i$ is assigned to super-node $j$. Otherwise, $Q_t[i,j]=0$.

A simple way to normalize $Q_t$ is assuming each node contributes equally to their corresponding super-node (e.g. $Q_t[i,j] = 1/\sqrt{c_j}$ for all any node $i$ that belongs to cluster/supernode $j$). However,  nodes might have varying contributions to a cluster depending on their significance. Intuitively, when a node is identified as a very popular one that is directly or indirectly connected with a large number of other nodes, preserving more of its attributes can potentially mitigate the effects of the inevitable ``information degrading'' caused by the graph coarsening procedure. 
To address the above issue, we propose to use two different measures to decide a node's importance score: 1) \emph{node degree}: the number of 1-hop neighbors of the node. 2) \emph{neighbor degree sum}: the sum of the degrees of a node's 1-hop neighbors.
In this step, we propose to normalize $Q_t$ to $P_t$ utilizing these node importance information. We calculate the member contribution matrix
 $P_t\in\mathbf{R}^{n^c_t \times n^r_t}$. 
Let $i$ be a node belonging to cluster $C_j$ at timestamp $t$, and $s_i>0$ be the importance score of node $i$ (node degree or neighbor degree sum), then
$p_{t,{(ij)}} =  {\sqrt{\frac{s_i}{\sum_{v\in{C_j}}s_v}}}$.
It is straightforward to prove that $P_{t}^\top P_{t}=I$ still holds.
Once we have $P_t$, we get the new reduced graph 
$\mathcal{G}^r_{t} = ({A}^r_{t}, {X}^r_{t}, {Y}^r_{t})$ as:
\begin{equation}
    A^r_{t} = Q_t^{\top} A^c_{t} Q_t,~~~~
    X^r_{t} = P_t^{\top} X^c_{t},~~~~
    Y^r_{t} = \arg\max(P_t^{\top} Y^c_{t}),
    \label{eq::rg}
\end{equation}
where the label for each cluster is decided by a majority vote. Only partial nodes are labeled, and the rows of $Y^c_{t}$ for those unlabelled nodes are zeros and thus do not contribute to the vote.  


Through training all tasks, the number of nodes in the reduced graph $\mathcal{G}^r$ is upper-bounded by $\frac{1-\gamma}{\gamma}\cdot(n_{\text{MAX}})$, where $n_{\text{MAX}}$ is the largest number of the new nodes for each task (See Appendix~\ref{proof-1} for proof); when the reduction ratio $\gamma$ is 0.5, the expression above is equivalent to $n_{\text{MAX}}$, meaning the size of the reduced graph is roughly the same size with the original graph for each task. The number of edges $m$ is bounded by $n_{\text{MAX}}^2$, but we observe generally $m \ll n_{\text{MAX}}^2$ in practice.

\section{Empirical evaluation}

We conduct experiments on time-stamped graph datasets: Kindle~\cite{He2016, Julian2015}, DBLP \cite{Tang2008} and ACM \cite{Tang2008} to evaluate the performance of \model{}. See Appendix~\ref{dataset} for the details of the datasets and ~\ref{hyperparam} for hyperparameter setup. 

\noindent\textbf{Comparison methods}{  }
We compare the performance of \model{} with SOTA continual learning methods including EWC ~\cite{Kirkpatrick2017}, GEM ~\cite{LopezPaz2017}, TWP~\cite{Liu2020}, OTG~\cite{feng2023open}, ERGNN~\cite{Zhou2020}, SSM~\cite{Zhang2022}, DyGrain~\cite{Kim2022}, IncreGNN~\cite{Wei2022}, and SSRM~\cite{Su2023}. 
EWC and GEM were previously not designed for graphs, so we train a GNN on new tasks but ignore the graph structure when applying continual learning strategies.
ERGNN-rs, ERGNN-rb, and ERGNN-mf are ERGNN methods with different memory buffer updating strategies: Reservoir Sampling (rs), Ring Buffer (rb), and Mean of Features (mf) ~\cite{Chaudhry2019}.  SSRM is an additional regularizer to be applied on top of a CGL framework; we choose ERGNN-rs as the base CGL model. 
Besides, finetune provides the estimated lower bound without any strategies applied to address forgetting problems, and joint-train provides an empirical upper bound where the model has access to all previous data during the training process. 
 
 We also compare \coarsen{} with five representative graph coarsening SOTA methods. We replace the coarsening algorithm in \model{} with different coarsening algorithms. Alge. JC \cite{Chen2011}, Aff.GS \cite{Livne2012}, Var. edges \cite{Loukas2018}, and Var. neigh \cite{Loukas2018} are graph spectral-based methods; FGC \cite{kumar2022} consider both graph spectrals and node features.  
 We follow the implementation of \cite{Huang2021} for the first four coarsening methods.  



\noindent\textbf{Evaluation metrics} {  }
We use \emph{Average Performance} (AP$\uparrow$) and \emph{Average Forgetting} (AF$\downarrow$) ~\cite{Chaudhry2019} to evaluate the performance on test sets. AP and AF are defined as $\text{AP} = \frac{1}{T} \sum_{j=1}^T{a_{T,j}}, \,\,\,\,
    \text{AF} = \frac{1}{T} \sum_{j=1}^T{\max_{l \in \{1,...,T\} }{a_{l,j}-a_{T,j}}},$
where T is the total number of tasks and ${a_{i,j}}$ is the prediction metric of the model on the test set of task $j$ after it is trained on task $i$. The prediction performance can be measured with different metrics. In this paper, we use macro F1 and balanced accuracy score (BACC). F1-AP and F1-AF indicate the AP and the AF for macro F1  and likewise for BACC-AP and BACC-AF.  We follow~\cite{Grandini2020} to calculate the macro F1 and BACC scores for multi-class classification.

\setlength{\tabcolsep}{2pt} 
 
\begin{table}
\caption {Node classification performance with GCN as the backbone on three datasets (averaged over 10 trials). Standard deviation is denoted after $\pm$.} 
\small
\centering
\begin{tabular}{ l cccc c cccc c cccc}
 \toprule
  & \multicolumn{4}{c}{\textbf{Kindle}} & & \multicolumn{4}{c}{\textbf{DBLP}} && \multicolumn{4}{c}{\textbf{ACM}}\\ 
    \cmidrule{2-5}\cmidrule{7-10} \cmidrule{12-15}
  \textbf{Method} &\multicolumn{2}{c}{F1-AP(\%)} & \multicolumn{2}{c}{F1-AF (\%)}&& \multicolumn{2}{c}{F1-AP (\%)}& \multicolumn{2}{c}{F1-AF (\%)}&&\multicolumn{2}{c}{F1-AP (\%)}& \multicolumn{2}{c}{F1-AF (\%)}\\
  \cmidrule{1-15}
joint train&87.21&$\pm$ 0.55&0.45 & $\pm$ 0.25&&86.33&$\pm$ 1.38&0.77 &$\pm$ 0.13&&75.35 &$\pm$ 1.49&1.87 &$\pm$ 0.60\\
\cmidrule{1-15}
finetune&69.10 &$\pm$ 10.85&18.99 &$\pm$ 11.19&&67.85 &$\pm$ 8.05&20.43 &$\pm$ 7.07&&60.53 &$\pm$ 9.35&19.09 &$\pm$ 9.23\\
\cmidrule{1-15}
simple-reg&68.80 &$\pm$ 10.02&18.21 &$\pm$ 10.49&&69.70 &$\pm$ 9.16&18.69 &$\pm$ 8.48&&61.63& $\pm$ 10.09&17.83 &$\pm$ 9.99\\
EWC&77.08 &$\pm$  { }8.37&10.87 &$\pm$  8.62&&79.38 &$\pm$  4.86&8.85 &$\pm$  4.11&&66.48 &$\pm$  6.43&12.73 &$\pm$  6.26\\
TWP&78.90 &$\pm$  { }4.71&8.99 &$\pm$  4.93&&80.05 &$\pm$  3.71&8.23 &$\pm$  3.28&&65.98 &$\pm$  7.26&13.33 &$\pm$  6.94\\
OTG&69.01 &$\pm$  10.55&18.94 &$\pm$  10.79&&68.24 &$\pm$  10.12&20.12 &$\pm$  9.34&&61.45 &$\pm$  9.94&18.33 &$\pm$  9.86\\
GEM&76.08 &$\pm$  6.70&11.01 &$\pm$  7.27&&80.04 &$\pm$  3.24&7.90 &$\pm$  2.68&&67.17 &$\pm$  4.24&11.69 &$\pm$  3.94\\
ERGNN-rs&77.63 &$\pm$  3.61&9.64 &$\pm$  4.19&&78.02 &$\pm$  5.79&10.08 &$\pm$  5.16&&64.82 &$\pm$  7.89&14.43 &$\pm$  7.68\\
ERGNN-rb&75.87 &$\pm$  6.41&11.46 &$\pm$  6.98&&75.16 &$\pm$  7.24&12.85 &$\pm$  6.54&&63.58 &$\pm$  8.82&15.66 &$\pm$  8.71\\
ERGNN-mf&77.28 &$\pm$  5.91&10.15 &$\pm$  6.31&&77.42 &$\pm$  5.25&10.64 &$\pm$  4.38&&64.80 &$\pm$  8.49&14.59 &$\pm$  8.41\\
DyGrain&69.14 &$\pm$  10.47&18.88 &$\pm$  10.72&&67.52 &$\pm$  10.88&20.83 &$\pm$  10.16&&61.40 &$\pm$  9.57&18.47 &$\pm$  9.50\\
IncreGNN&69.45 &$\pm$  10.34&18.48 &$\pm$  10.66&&69.40 &$\pm$  9.60&18.92 &$\pm$  8.75&&61.32 &$\pm$  9.70&18.42 &$\pm$  9.64\\
SSM&78.99 &$\pm$  3.13&8.19 &$\pm$  3.63&&82.71 &$\pm$  1.76&4.20 &$\pm$  1.26&&68.77 &$\pm$  2.93&9.50 &$\pm$  2.47\\
SSRM&77.37 &$\pm$  4.06&9.99 &$\pm$  4.55&&77.43 &$\pm$  5.34&10.66 &$\pm$  4.47&&64.39 &$\pm$  7.43&14.72 &$\pm$  7.48\\
\model{}&\textbf{82.97} &$\pm$  \textbf{2.05}&\textbf{4.91} &$\pm$  \textbf{1.90}&&\textbf{84.60} &$\pm$  \textbf{2.01}&\textbf{2.51} &$\pm$  \textbf{1.03}&&\textbf{70.96} &$\pm$  \textbf{2.68}&\textbf{8.02} &$\pm$  \textbf{2.33}\\
\cmidrule{1-15}
  p-value&\multicolumn{2}{c}{<0.0001}&\multicolumn{2}{c}{<0.0001}&&\multicolumn{2}{c}{0.002
  }&\multicolumn{2}{c}{<0.0001}&&\multicolumn{2}{c}{0.005}&\multicolumn{2}{c}{0.02}\\
  \bottomrule

\end{tabular}

\label{tab::res}
\end{table} 


\noindent\textbf{Main results}{  }
We evaluate the performance of \model{} and other baselines on three datasets and three backbone GNN models, including GCN~\cite{Kipf2016}, GAT~\cite{Veličković2017}, and GIN ~\cite{xu2019}. We only report the node classification performance in terms of F1-AP and F1-AF with GCN in Table \ref{tab::res} due to the space limit. See Appendix~\ref{appendix::main-results} for complete results.   
We report the average values and the standard deviations over 10 runs. 
 It shows that \model{} outperforms the best state-of-the-art CGL baseline method with high statistical significance, as evidenced by p-values below 0.05 reported from a t-test. 
Additionally, we note that despite being Experience Replay-based methods, ER-rs, ER-rb, and ER-mf do not perform as well as SSM and \model{}, highlighting the importance of retaining graph structural information when replaying experience nodes to the model. Furthermore, we infer that \model{} outperforms SSM due to its superior ability to preserve graph topology information and capture task correlations through co-existing nodes. 
\begin{wraptable}{R}{8cm}
\caption {Coarsen runtime (seconds) and node classification results of \model{} variations with different coarsening methods on three datasets with GCN (average over 10 trials).  Boldface
and blue color indicate the best and the second-best result of each column.}
\scriptsize
\centering
\begin{tabular}
{ l cc c cc c cc c}
 \toprule
  & \multicolumn{2}{c}{\textbf{Kindle}} &  \multicolumn{3}{c}{\textbf{DBLP}} & \multicolumn{3}{c}{\textbf{ACM}}\\ 
    \cmidrule{2-3}\cmidrule{5-6} \cmidrule{8-10}
  \textbf{Method} &Time &F1-AP (\%)&& Time &F1-AP (\%)&& Time &F1-AP (\%)&\\
  \cmidrule{1-10}
  Alge. JC&8.9&81.09$\pm$2.15&&70.8&\textbf{85.24$\pm$1.55}&&11.8&70.25$\pm$3.23\\
  Aff. GS&65.6&77.42$\pm$4.44&&237.1&84.34$\pm$1.70&&96.1&68.12$\pm$5.69\\
  Var. neigh&6.9&80.77$\pm$3.66&&7.3&84.91$\pm$1.38 &&10.3&66.83$\pm$7.53\\
  Var. edges&10.1&\textcolor{blue}{82.29$\pm$1.95}&&28.0&\textcolor{blue}{85.13$\pm$1.86} &&13.8&\textbf{71.42$\pm$2.48}\\
FGC&7.7&81.42$\pm$2.98&&10.8&84.77$\pm$1.69&&7.0&66.97$\pm$7.44 \\
\coarsen{}&\textbf{2.3}&\textbf{82.97$\pm$2.05}&&\textbf{1.1}&{84.60$\pm$2.01 }&&\textbf{1.4}&\textcolor{blue}{70.96$\pm$2.68}\\
  \bottomrule
\end{tabular}
\label{tab::coarsening}
\end{wraptable} 

\noindent\textbf{Graph coarsening methods}{  }
We evaluate the performance of \coarsen{} by replacing the graph coarsening module of \model{} with five widely used coarsening algorithms, while keeping all other components unchanged. The overall results on GCN are
presented in Table \ref{tab::coarsening}. See Appendix~\ref{appendix::coarsening} for complete results. We report the average time in seconds consumed for each model to coarsen the graph for each task. The results demonstrate that   \coarsen{} achieves better or comparable prediction performance compared with the best baseline while being considerably more efficient in computing time compared to all other models.   




\noindent\textbf{Graph reduction rates}{  }
We examine how \coarsen{} preserves original graph information. Following the setting in \cite{Huang2021}, we train GNNs from scratch on original and coarsened graphs, then compare their prediction performance on the test nodes from the original graph. Using subgraphs from the first task of three datasets as original graphs, we train GCN models on coarsened graphs with different coarsening rates $1-\gamma$. Figure~\ref{fig::reduction} (a) shows that prediction performance is relatively stable as graphs are reduced for DBLP and ACM, but F1 scores are more sensitive to the reduction rate on Kindle. This may be due to overfitting on smaller datasets. Although reduced graphs may not preserve all information, they can still be used to consolidate learned knowledge and reduce forgetting in CGL paradigm. We also test if the model learns similar node embeddings on coarsened and original graphs. In Figure \ref{fig::reduction} (b)-(d), we visualize test node embeddings on the DBLP dataset for reduction rates of 0, 0.5, and 0.9 using t-SNE. We observe similar patterns for 0 and 0.5 reduction rates, and gradual changes as the reduction rate increases.
\begin{figure}
\begin{tabular}{cccc}
  \includegraphics[width=35mm]{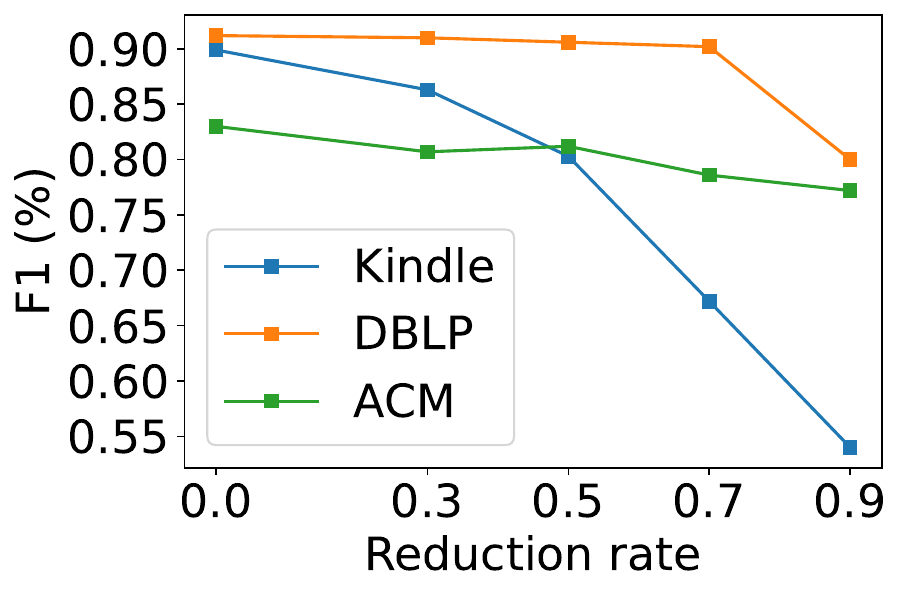}&\includegraphics[width=35mm]{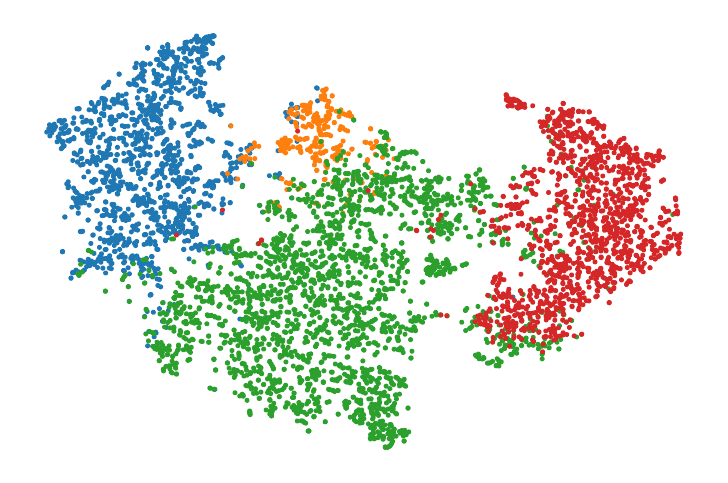} &
  \includegraphics[width=35mm]{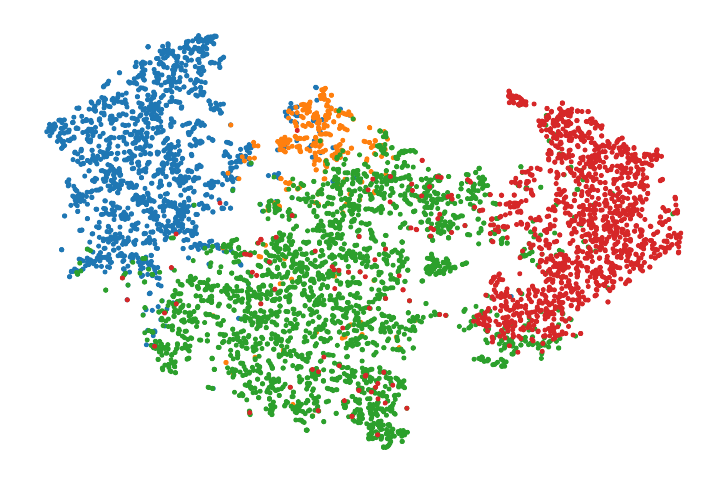}&\includegraphics[width=35mm]{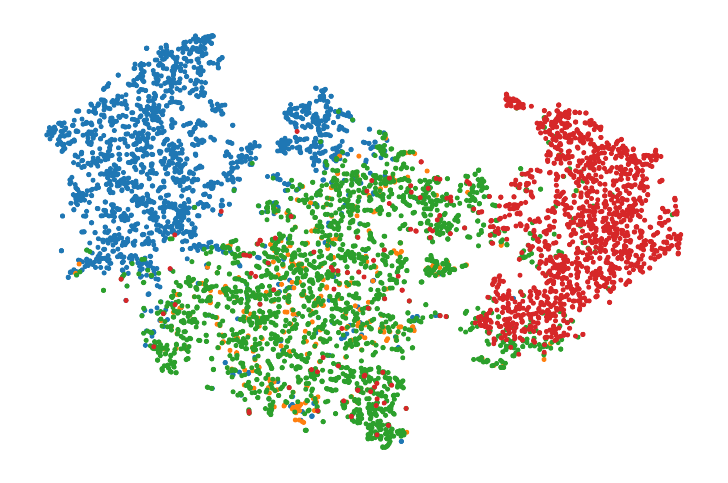} \\
(a) GCN test F1 & (b) $1-\gamma=0$ &(c) $1-\gamma=0.5$ & (d) $1-\gamma=0.9$ \\[2pt]
\end{tabular}
\caption{(a) The test macro-F1 scores of the GCN model trained on the coarsened graphs with different reduction rates on three datasets. (b)-(d) t-SNE visualization of node embeddings of the DBLP test graph with a reduction rate of 0, 0.5, and 0.9 on the training graph respectively.  }
\label{fig::reduction}
\end{figure}



\noindent\textbf{Additional ablation studies}{  }  We also study the performances of CGL models after training on each tasks~\ref{first-task}, the effectiveness of the Node Fidelity Preservation~\ref{nfp}, and the effects of different important node selection strategies~\ref{important-node}. We present those results in Appendix~\ref{additional} for those readers who are interested.

\section{Conclusion}
In this paper, we present a novel CGL framework, \model{}, which stores useful information from previous tasks with a dynamically reduced graph to consolidate learned knowledge. Additionally, we propose an efficient embedding proximity-based graph coarsening method, \coarsen{}, that can preserve important graph properties. We present a Node Fidelity Preservation strategy and theoretically prove its capability in preventing the vanishing minority class problem. We demonstrate the effectiveness of \model{} and \coarsen{} on real-world datasets with a realistic streaming graph setup.

\noindent\textbf{Limitations:}
This work only considers situations where nodes and edges are added to streaming graphs with a single relation type. In the future, we plan to investigate continual graph learning methods when nodes and edges can be deleted or modified. {Moreover, we will generalize our method to complex graphs such as multi-relation graphs and broader graph tasks such as link prediction for recommendation systems.}
\bibliographystyle{unsrt}  
\bibliography{main}  
\clearpage
\appendix

\section*{\centering Appendix} 

\section{Pseudocode}
\label{pseudo}
The pseudocode of the proposed CGL framework \model{} is described in Algorithm~\ref{alg::zoomout}, and the proposed graph coarsening module \coarsen{} is described in Algorithm~\ref{alg::repro}. 
\begin{algorithm}
	\caption{The Proposed Method \model{} }
 \label{alg::zoomout}
	\begin{algorithmic}[1]
     \State \textbf{Input}: A sequence of graphs ${\mathcal{G}_1,...,\mathcal{G}_k}$ ($\mathcal{G}_t$ is only accessible at task $t$)
      \State \textbf{Output}: A trained GNN node classifier $f$ with parameter $\theta$
      \State  $\mathcal{G}^r_0, \mathcal{M}_0,  \mathcal{V}^{rb}_0\leftarrow \varnothing, \{\}, \{\}$  
        \For {task t = 1 \textbf{to} k} 
        \State $(\mathcal{G}^c_t,\mathcal{M}_{t-1}) \leftarrow \mbox{\textsf{combine}} (\mathcal{G}_t, \mathcal{G}^r_{t-1}, \mathcal{M}_{t-1}$)
        \For {epoch=0 \textbf{to} num\_epoch}
        \State Train $f$ on $\mathcal{G}^c_t$ and update $\theta$
        \EndFor
        \State $\mathcal{V}^{rb}_t \leftarrow \mbox{SamplingStrategy}(\mathcal{V}^{rb}_{t-1}, \mathcal{G}_t)$
        \State Compute node embedding $H_t$ based on Eq. \ref{eq::emb}
        \State $P_t, Q_t, \mathcal{P}(\cdot)
        \leftarrow \mbox{{Graph{}Coarsening{}Algorithm}} (\mathcal{G}^r_t, H_t, \mathcal{V}^{rb}_t)$
        \State Compute $\mathcal{G}^r_t$ based on Eq. \ref{eq::rg}. 
        \State {$\mathcal{M}_t\leftarrow \mathcal{M}_{t}(v) = \mathcal{P}(\mathcal{M}_{t-1}(v))$ for $v\in \mathcal{M}_{t-1}$}
        \EndFor
        \State{}

    \Function{combine} {$\mathcal{G}_t, \mathcal{G}^r_{t-1}, \mathcal{M}_{t-1}$}
    \State {$\mathcal{G}^c_t \leftarrow \mathcal{G}^r_{t-1}$}
    \For{each edge $(s,o)\in\mathcal{E}_t$}
    \If {$\mathcal{T}(s)=\mathcal{T}(o)=t$}
        \State {Add $s$ and $o$ to $\mathcal{G}^c_t$ and $\mathcal{M}_{t-1}$}
        \State {Add an undirected edge from $s$ to $o$ on $\mathcal{G}^c_t$}
    \Else \IIf {$\mathcal{T}(s)=t$ \textbf{and} $\mathcal{T}(s)<t$ \textbf{and} $o\in \mathcal{M}_{t-1}$ }
        \State{Add $s$ to $\mathcal{G}^c_t$ and $\mathcal{M}_{t-1}$}
        \State{Add an undirected edge from $s$ to $\mathcal{M}_{t-1}(o)$ on $\mathcal{G}^c_t$}
    \EndIf \EndIIf
    \EndFor
    \State {\textbf{Return} $\mathcal{G}^c_t$, $\mathcal{M}_{t-1}$}
    \EndFunction
	\end{algorithmic} 
\end{algorithm}

\begin{algorithm}
	\caption{The Proposed Graph Coarsening Algorithm \coarsen } 
    \label{alg::repro}
	\begin{algorithmic}[1]
     \State \textbf{Input}: The original graph $\mathcal{G}$, node embedding matrix $H$, node sets $V^{rb}$, and the reduction rate $\gamma$
      \State \textbf{Output}: partition matrix $Q$, normalized partition matrix $P$, and the mapping function $\mathcal{P}(\cdot)$
      \State Initialize the mapping function $\mathcal{P}(\cdot)$ such that $\mathcal{P}(v)=v$ for $v\in\mathcal{V}$
      \State $n^r \leftarrow |\mathcal{V}|$
      \State $n^{\mbox{target}} \leftarrow \lfloor r\cdot|\mathcal{V}| \rfloor$
      \State Sort all edges $e\in\mathcal{E}$ in the descending order based on their similarity scores calculated according to Eq. \ref{emb-sim-2} 
      \For{each edge $e = (u,v)\in\mathcal{E}$}
       \If{$\mathcal{P}(u)\neq \mathcal{P}(v)$}  
       \State // if $u$ and $v$ are in different clusters
       \State Merge the clusters of $u$ and $v$ such that $\mathcal{P}(u) = \mathcal{P}(v)$
       \State {$n^r = n^{r}-1$ }
       \EndIf
       \If{$n^r\leq n^{\mbox{target}}$} \State {Break} 
       \EndIf
      \EndFor
      \State Construct $Q$ with $\mathcal{P}(\cdot)$; compute $P$ based on $p_{t,{(ij)}} =  {\sqrt{\frac{s_i}{\sum_{v\in{C_j}}s_v}}}$.
      \State Return $Q$, $P$, $\mathcal{P}(\cdot)$
	\end{algorithmic} 
\end{algorithm}

\section{Supplemental Methodology}

\subsection{Overall framework}\label{appendix:framework}
In the proposed framework, during the training process, we use a reduced graph $\mathcal{G}^r$ as an approximate representation of previous graphs, and a function $\mathcal{M}(\cdot)$ (e.g., a hash table) to map each original node to its assigned cluster (super-node) in $\mathcal{G}^r$ (e.g., if an original node $i$ is assigned to cluster $j$, then $\mathcal{M}(i) = j$). Both  $\mathcal{G}^r_{t}$ and $\mathcal{M}_{t}(\cdot)$ are updated after the $t$-th task. 

To get started we initialize $\mathcal{G}^r_0$ as an empty undirected graph and $\mathcal{M}_0$ as an empty hash table. 
At task $T_t$, the model holds copies of $\mathcal{G}^r_{t-1}$, $\mathcal{M}_{t-1}$ and an original graph $\mathcal{G}_{t}$ for the current task.

We combine $\mathcal{G}_{t}$ with $\mathcal{G}^r_{t-1}$ to form a new graph $\mathcal{G}^c_{t}$ according to the following procedure:  
\begin{enumerate}[leftmargin=*,itemsep=1mm, parsep=0pt]
    \item Initialize the combined graph $\mathcal{G}^c_{t} = ({A}^c_{t}, {X}^c_{t}, {Y}^c_{t})$ as $ \mathcal{G}^r_{t-1}$ such that $A_{t}^c=A_{t-1}^r$, $X_{t}^c=X_{t-1}^r$, and $Y_{t}^c=Y_{t-1}^r$;
    \item \textit{Case 1: new source node and new target node.} For each edge $(s,o)\in \mathcal{E}_{t}$, if $\tau(s)=\tau(o)=t$, we add $s$ and $o$ to $\mathcal{G}^c_{t}$, and we add an undirected edge ($s$,$o$) to $\mathcal{G}^c_{t}$ ;
    \item \textit{Case 2: new source node and old target node.} If $\tau(s)=t$, but $\tau(o)<t$, and $o\in \mathcal{M}_{t-1}$, we add $s$ to $\mathcal{G}^c_{t}$, and we add an undirected edge ($s$, $\mathcal{M}_{t-1}(o)$) to $\mathcal{G}^c_{t}$; 
    \item \textit{Case 3: deleted target node.} If $\tau(o)<t$ and $o\notin \mathcal{M}_{t-1}$, we ignore the edge. 
    \item When a new node $v$ is added to $\mathcal{G}^c_{t-1}$, it is also added to $\mathcal{M}_{t-1}$ and is assigned to a new cluster. 
\end{enumerate}
It is worth noting that we use directed edges to better explain the above procedure. In our implementation, the combined graph is undirected since the directness of the edges of the combined graph is not critical in learning node embeddings in a graph like a citation network.

\subsection{Node Fidelity Preservation}\label{proof-fidelity}
\begin{customthm}{4.1}
Consider $n$ nodes with $c$ classes, such that the class distribution of all nodes is represented by $\mathbf{p} =p_1, p_2, ..., p_c$, where  $\sum_{i=1}^{c} p_i = 1$. If these nodes are randomly partitioned into $n'$ clusters such that $n' = \lfloor \gamma \cdot n \rfloor$, $0<\gamma<1$ and the class label for each cluster is determined via majority voting. The class distribution of all the clusters is 
$\mathbf{p}' = p'_1, p'_2, ... , p'_c$ where $p'_i$ is computed as the ratio of clusters labeled as class $i$ and $\sum_{i=1}^{c} p'_i = 1$.
Let $k$ be one of the classes, and the rest of the class are balanced $p_1 =...= p_{k-1} = p_{k+1}=... = p_c$.
It holds that:

1. If $p_k=1/c$ and all classes are balanced $p_1 = p_2 = ... = p_c$, then $\mathbb{E}[p'_k] = p_k$.

2. 
When $p_k<1/c$, $\mathbb{E}[p'_k]<p_k$, and 
$\mathbb{E}[\frac{p'_k}{p_k}]$ decreases as $n'$ decreases. There exists a $p^\text{min}$ such that $0<p^\text{min}<1$, and when $p_k<p^\text{min}$, $\mathbb{E}[\frac{p'_k}{p_k}]$ decrease as $p_k$ decreases. 
\end{customthm}

\begin{proof}
We prove the theorem by deriving the value of $\mathbb{E}[p'_k]$. Since $\mathbb{E}[p'_k]$ is invariant to the order of the classes, for convenience, we consider $i=1$ without losing generability. The probability of the first class after the partitioning is:
\begin{align}
\mathbb{E}[p'_1] = \frac{1}{n'}\sum_{a=1}^{n-n'} \mathbb{E}[n_a] q (a, \mathbf{p}),
\end{align}

where $\mathbb{E}[n_a]$ is the expectation of the number of clusters containing $a$ nodes, and $q (a, \mathbf{p})$ is the probability that class 1 is the majority class in a cluster with size $a$.

\begin{property}
The expectation of the number of clusters containing $a$ node is 
\[ \mathbb{E}[n_a] = n' \times \binom{n-n'+1}{a-1} \left(\frac{1}{n'}\right)^{a-1} \left(1 - \frac{1}{n'}\right)^{n-n'-(a-1)} .\]
\end{property}

\begin{proof}
Define \(I_{j}\) to be the indicator random variable for the \(j^{th}\) cluster, such that:
\[ I_{j} = 
\begin{cases} 
1 & \text{if the } j^{th} \text{ cluster contains exactly } a \text{ samples,} \\
0 & \text{otherwise.}
\end{cases}
\]
We first compute the expected value of \(I_{j}\) for a single cluster. Let's calculate the probability that \(a-1\) out of the \(n-n'\) samples are allocated to the \(j^{th}\) cluster:

(a) There are \(\binom{n-n'}{a-1}\) ways to choose \(a-1\) samples from the \(n-n'\) remaining samples.

(b) The probability that each of these \(a-1\) samples is placed in the \(j^{th}\) cluster is \(\left(\frac{1}{n'}\right)^{a-1}\).

(c) The remaining \(n-n'-a +1\) samples from our original pool of \(n-n'\) should not be allocated to the \(j^{th}\) cluster. The probability that all of them avoid this cluster is \(\left(1 - \frac{1}{n'}\right)^{n-n'-(a-1)}\).

Thus, the probability that the \(j^{th}\) cluster contains exactly \(x\) samples is:

\begin{align} \mathbb{E}[I_{j}] = \binom{n-n'}{a-1} \left(\frac{1}{n'}\right)^{a-1} \left(1 - \frac{1}{n'}\right)^{n-n'-(a-1)}. \end{align}

The expected number of clusters out of all \( n' \) clusters that contain exactly \( a \) samples is:

\begin{align} \mathbb{E}[n_a] = \sum_{j=1}^{n'} \mathbb{E}[I_{j}]. \end{align}

Given that each \( I_j \) is identically distributed, we can simplify the sum:

\begin{align} \mathbb{E}[n_a] = n' \times \mathbb{E}[I_{j}]. \end{align}

Substituting the expression derived for \( \mathbb{E}[I_{j}] \) from step 2, we obtain:

\begin{align}
\mathbb{E}[n_a] = n' \times \binom{n-n'}{a-1} \left(\frac{1}{n'}\right)^{a-1} \left(1 - \frac{1}{n'}\right)^{n-n'-(a-1)}. \end{align}
\end{proof}

It is easy to show that when $p_1 = p_2 = ... = p_c$, it holds that $q (a, \mathbf{p}) = \frac{1}{c}$ since all classes are equivalent and have equal chances to be the majority class. Thus:
\begin{align}
\mathbb{E}[p'_1] &= \frac{1}{n'}\sum_{a=1}^{n'} \mathbb{E}[n_a] q (a, \mathbf{p})
   \\&= \frac{1}{n'}\sum_{a=1}^{n'} \mathbb{E}[n_a] \frac{1}{c}
   \\&= \frac{1}{n'}\sum_{a=1}^{n'} n' \times \binom{n-n'}{a-1} \left(\frac{1}{n'}\right)^{a-1} \left(1 - \frac{1}{n'}\right)^{n-n'-(a-1)} \frac{1}{c}
   \\&= \frac{1}{n'} n' \frac{1}{c}
   \\&= \frac{1}{c}.
\end{align}

To compute $q (a, \mathbf{p})$, we need to consider the situations in which class 1 is the exclusive majority class and the cases where class 1 ties with one or more other classes for having the most samples. In the second case, we roll a die to select the majority class from all the classes that have most samples. 

To start, we consider the situation when class 1 has the most nodes and no other classes tie with it. We enumerate all possible combinations of class assignments and calculate the probability of each.  
\begin{align}
q_{0} (a,\mathbf{p}) = \sum_{i_1=1}^{a} \sum_{i_2=0}^{i_1-1}...\sum_{i_c=0}^{i_1-1}\mathbbm{1}\{\sum_{k=1}^{c} i_k =a\} f(\mathbf{i}, a, \mathbf{p}),
\end{align}
where $\mathbf{i} = i_1, i_2...i_c$,  and
\begin{align}
 f(\mathbf{i}, a, \mathbf{p}) 
 &= \prod_{k=1}^{c} \binom{a-i_1...-i_k}{i_k}  p_{k}^{i_k} (1-p_k)^{a-i_1-...-i_k},  
\end{align}

and 
\begin{align}
   p_k =
    \begin{cases}
      p_1 & \text{if $k=1$}\\
      \frac{1-p_1}{c-1} & \text{otherwise}\\
    \end{cases} . 
\end{align}

We then consider the situation when class 1 ties with another class. We first select another class that ties in with class 1, and we enumerate the possibility of other classes. We multiply the whole term by $\frac{1}{2}$ as there is an equal chance to select either class as the majority class. 

\begin{align}
q_{1} (a,\mathbf{p}) = \frac{1}{2} \sum_{j_1=2}^{c} \left(\sum_{i_1=1}^{a}  \sum_{i_2=0}^{i_1-1}...\sum_{i_{j_1}=i_1}^{i_1}...\sum_{i_c=0}^{i_1-1}\mathbbm{1}\{\sum_{k=1}^{c} i_k =a\} f(\mathbf{i}, a, \mathbf{p}) \right) .
\end{align}

We extend the above equation to a general case where class 1 ties with k classes ($1\leq k \leq c-1$). Here we need to select k   classes that tie with class 1. Class 1 now shares a $\frac{1}{k+1}$ chance to be selected as the majority class with the selected $k$ classes.
\begin{align}
q_{k} (a,\mathbf{p}) = \frac{1}{k} \sum_{j_1=2}^{c-k+1}\sum_{j_2=j_1}^{c-k+2}...\sum_{j_k=j_{k-1}}^{c} \left( \sum_{i_1=1}^{a}  \sum_{i_2=0}^{i_1-1}...\sum_{i_{j_1}=i_1}^{i_1}...\sum_{i_{j_k}=i_1}^{i_1}...\sum_{i_c=0}^{i_1-1} \mathbbm{1}\{\sum_{k=1}^{c} i_k =a\} f(\mathbf{i}, a, \mathbf{p})\right) .
\end{align}

Finally, we combine all the cases, and the probability that class 1 is the majority class is:

\begin{align}
q (a, \mathbf{p}) = \sum_{k=0}^{c-1} q_k (a, \mathbf{p}).
\end{align}

The expectation of $\frac{p'_1}{p_1}$ is thus:
\begin{align}
\mathbb{E}[\frac{p'_1}{p_1}] =
\frac{1}{p_1} \frac{1}{n'}\sum_{a=1}^{n'} \mathbb{E}[n_a] q (a, \mathbf{p}).
\end{align}

To study the behavior of $\mathbb{E}[\frac{p'_1}{p_1}]$ when $p_1$ changes, we derive the following derivative:
\begin{align}
\frac{d\mathbb{E}[\frac{p'_1}{p_1}]}{dp_1} &= 
\frac {d\frac{1}{p_1} \frac{1}{n'}\sum_{a=1}^{n'} \mathbb{E}[n_a] q (a, \mathbf{p})}{dp_1}\\
&=  \frac{1}{n'}\sum_{a=1}^{n'} \mathbb{E}[n_a] 
\frac{d\frac{q (a, \mathbf{p})}{p_1}}{dp_1},
\end{align}

where 
\begin{align}
\frac{d\frac{q (a, \mathbf{p})}{p_1}}{dp_1} &=  \sum_{k=0}^{c-1} \frac{d\frac{q_k (a, p)}{p_1}}{dp_1}\\
&= \frac{1}{2} \sum_{j_1=2}^{c-k+1}\sum_{j_2=j_1}^{c-k+2}...\sum_{j_k=j_{k-1}}^{c} \left( \sum_{i_1=1}^{a}  \sum_{i_2=0}^{i_1-1}...\sum_{i_{j_1}=i_1}^{i_1}...\sum_{i_{j_k}=i_1}^{i_1}...\sum_{i_c=0}^{i_1-1} \mathbbm{1}\{\sum_{k=1}^{c} i_k =a\} \frac{d\frac{f(\mathbf{i}, a, \mathbf{p})}{p_1}}{dp_1}\right).
\end{align}

To find $\frac{d\frac{f(\mathbf{i}, a, \mathbf{p})}{p_1}}{dp_1}$, we first separate the terms that are independent of $p_1$:
\begin{align}
 \frac{f(\mathbf{i}, a, \mathbf{p})  }{p_1}
 &= \frac{1}{p_1} \prod_{k=1}^{c} \binom{a-i_1...-i_k}{i_k}  p_{k}^{i_k} (1-p_k)^{a-i_1-...-i_k}\\
 &= \frac{1}{p_1}  \left( \prod_{k=1}^{c} \binom{a-i_1...-i_k}{i_k}\right) \prod_{k=1}^{c}  p_{k}^{i_k} (1-p_k)^{a-i_1-...-i_k} \\
  &= \left( \prod_{k=1}^{c} \binom{a-i_1...-i_k}{i_k} \right) p_{1}^{i_1-1} (1-p_{1})^{a-i_1}(\frac{1-p_1}{c-1})^{\sum_{k=2}^{c} i_k} (1-\frac{1-p_1}{c-1})^{\sum_{k=2}^{c} a-i_2...-i_k} \\
    &= u \cdot p_{1}^{i_1-1} (1-p_{1})^{a-i_1}(1-p_1)^{\sum_{k=2}^{c} i_k} (p_1+c-2)^{\sum_{k=2}^{c} a-i_2...-i_k} \\
    &= u \cdot p_{1}^{
    \overbrace{i_1-1}^{\theta}}
     \cdot(1-p_{1})^{\overbrace{a-i_1 + \sum_{k=2}^{c} i_k}^{\phi}}
     \cdot (p_1+c-2)^{\overbrace{\sum_{k=2}^{c} a-i_2...-i_k}^{\psi}} ,
\end{align}
where $u$ is independent of $p_1$
\begin{align}
u = \left( \prod_{k=1}^{c} \binom{a-i_1...-i_k}{i_k} \right) (\frac{1}{c-1})
^{\sum_{k=2}^{c} i_k + \sum_{k=2}^{c} {a-i_2...-i_k}}
\end{align}
We observe that $\frac{f(\mathbf{i}, a, \mathbf{p})}{p_1}$ demonstrates different behaviors for $a=1$ and $a>1$ and discuss the two cases separately.

(1) When \emph{$a=1$}, it holds that $i_1=1$, $\theta=0$, $\phi=0$, and $\psi=0$:
\begin{align}
\frac{f(\mathbf{i}, a, \mathbf{p})  }{p_1} = 
 u \cdot p_{1}^{0}
    (1-p_{1})^{0}
    (p_1+c-2)^{0} = u.
\end{align}

In such case, $\frac{d \frac{f(\mathbf{i}, a, \mathbf{p})}{p_1}}{dp_1}$ is independent with $p_1$ and remain constant when $p_1$ changes.

(2) When \emph{$a>1$}, it holds that $i_1\leq1$, $\theta\geq0$, $\phi\geq0$, $\psi\geq0$, and $\theta+\phi >0$:

\begin{align}
    \frac{d \frac{f(\mathbf{i}, a, \mathbf{p})}{p_1}}{dp_1} &= u\cdot p^{\theta-1}(1-p)^{\phi-1}(p+c-2)^{\psi-1}\cdot v, 
\end{align}
where
\begin{align}
    v &= \theta (1-p)(p+c-2) + \psi p(1-p) - \phi p(p+c-2)\\
    & = (-\theta - \phi - \psi)p^2 + (\theta + \psi + (\phi-\theta)(c-2))p + \theta(c-2).
\end{align}

When $0<p_1<1$ and $u>0$, $\frac{d \frac{f(\mathbf{i}, a, \mathbf{p})}{p_1}}{dp_1}=0$ if and only if $v=0$, 
and the corresponding value of $p_1$ is:
\begin{align}
p_1^0 = \frac{-\left(\theta - \theta \cdot (c-2) + \phi - \psi \cdot n\right) - \sqrt{\Delta}}{2\left(-\theta - \phi - \psi\right)},
\end{align}
\begin{align}
p_1^1 = \frac{-\left(\theta - \theta \cdot (c-2) + \phi - \psi \cdot n\right) + \sqrt{\Delta}}{2\left(-\theta - \phi - \psi\right)},
\end{align}
where 
\begin{align}
\Delta = \left(\theta - \theta \cdot (c-2) + \phi - \psi \cdot (c-2)\right)^2 - 4\left(-\theta - \phi - \psi\right)\left(\theta \cdot (c-2)\right).
\end{align}

It is easy to show that $\Delta>0$, and since $(-\theta - \phi - \psi)<0$, $v$ is concave, $v>0$ when $p_1^1 < p < p_1^0$. 
Also, it is observed that when $p_1=0$, 
\begin{align}
    v = \theta (c-2) \geq 0;
\end{align}
when  $p_1=1$,
\begin{align}
    v = -\phi (c-1) < 0.
\end{align}

Thus it must be held that $0<p_1^0<1$ and $p_1^1\leq 0$, and for any  $(\mathbf{i}, a>1)$, there exist a $0<p_1(\mathbf{i}, a>1)^0<1$ such that when $p_1<p_1^0(\mathbf{i}, a>1)$, $\frac{d\frac{f(\mathbf{i}, a, \mathbf{p})}{p_1}}{dp_1}>0$. 
Let 
\begin{align}
p_1^\text{min} = \min_{\forall a \in \{2,...,n'\}, \mathbf{i} \in \mathbf{I} } p_1^0(\mathbf{i}, a),
\end{align}
where $\mathbf{I}$ is all possible $\mathbf{i}$,  then it holds that $0<p_1^\text{min}<1$, and when $p_1<p_1^\text{min}$, $\frac{d\frac{q (a, \mathbf{p})}{p_1}}{dp_1} > 0$.

Next, we show that $\mathbb{E}[\frac{p'_1}{p_1}]$ decreases as $n'$ decreases when $p_1<1/c$. We first rewrite  $\mathbb{E}[\frac{p'_1}{p_1}]$ as
\begin{align}
\mathbb{E}[\frac{p'_1}{p_1}] =& \frac{1}{p_1}\frac{1}{n'}\sum_{a=1}^{n'} \mathbb{E}[n_a] q (a, \mathbf{p}) \\
=& \frac{1}{p_1}\sum_{a=1}^{n'} \binom{n-n'}{a-1} \left(\frac{1}{n'}\right)^{a-1} \left(1 - \frac{1}{n'}\right)^{n-n'-(a-1)} q (a, \mathbf{p})
\end{align}

First, we show that $q (a, \mathbf{p})$ is smaller for larger $a$ when $p_1< 1/c$. The intuition is that when a new node is added to a cluster originally with $a-1$ nodes, the new node has a higher probability of being another class, and the likelihood of class 1 becoming the majority class decreases.

Next, we show that when $n'$ increases,  $\binom{n-n'}{a-1} \left(\frac{1}{n'}\right)^{a-1} \left(1 - \frac{1}{n'}\right)^{n-n'-(a-1)}$ becomes more left-skewed, that it gives a smaller value for large $a$. The intuition is that, as the value of $n'$ increases, the average cluster size is expected to decrease. As a result, a large $a$ becomes farther from the average cluster size, and the probability of a cluster having exactly $a$ nodes decreases, leading to a decrease in the ratio of clusters with size $a$. 


With the above observations, it can be concluded that when $ p_1<1/c $, the value of $\mathbb{E}[\frac{p'_1}{p_1}]$ decreases as $n'$ decreases.
\end{proof}

\begin{customobs}{4.1}
  Node Fidelity Preservation with buffer size $b$ can alleviate the declination of a minority class $k$ when $p_k$ decreases and $n'$ decreases, and prevent class $k$ from vanishing at small when $p_k$ is small.  
\end{customobs}

The mechanism of Node Fidelity Preservation is to ``hold'' $b$ clusters such that each cluster only has 1 node. We already show that when $a=1$, $\frac{q(a, \mathbf{p})}{q_1}$ is independent of $q_1$ and so $\mathbb{E}[p'_1] = p_1$. By doing so, we make sure that among the $b$ nodes, class 1 does not decline as $p_1$ or $n'$ decline. 

\begin{figure}
\centering
\begin{tabular}{cc}
  \includegraphics[width=50mm]{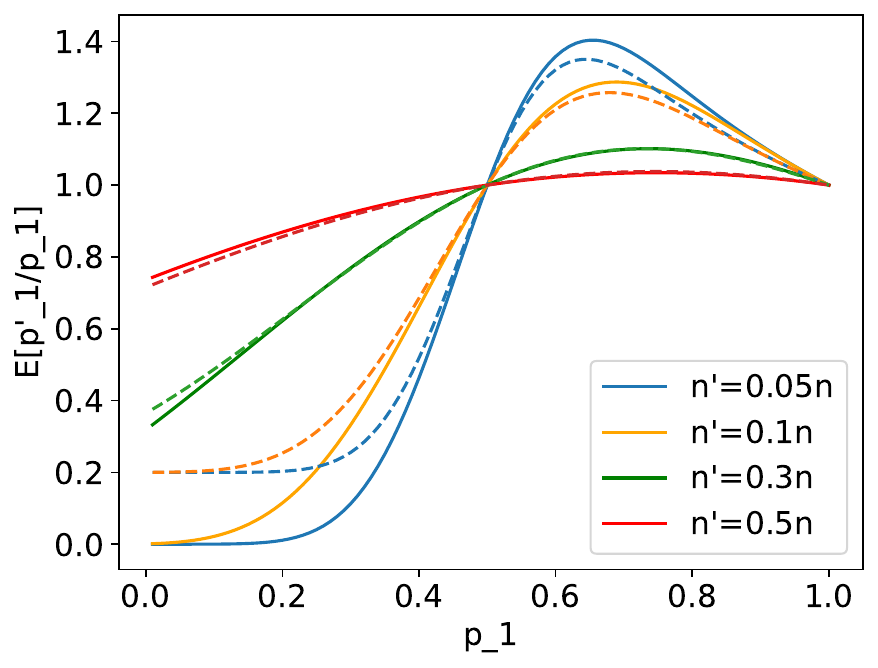}&\includegraphics[width=50mm]{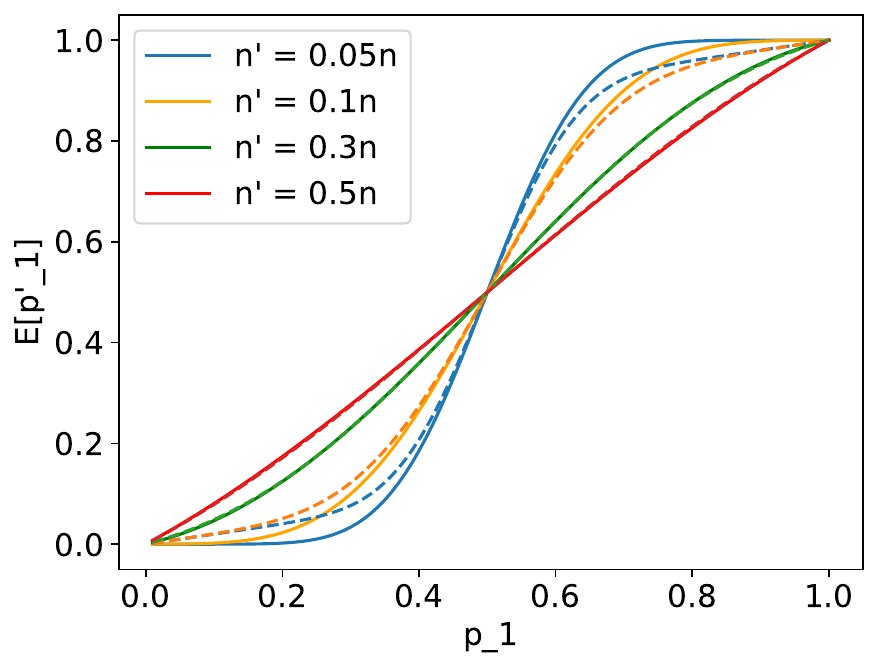}\\
  (a) $\mathbb{E}[\frac{p'_1}{p_1}]$ vs. $p_1$, $c=2$ & (b) ${\mathbb{E}[p'_1]}$ vs. $p_1$, $c=2$\\[2pt]
\includegraphics[width=50mm]{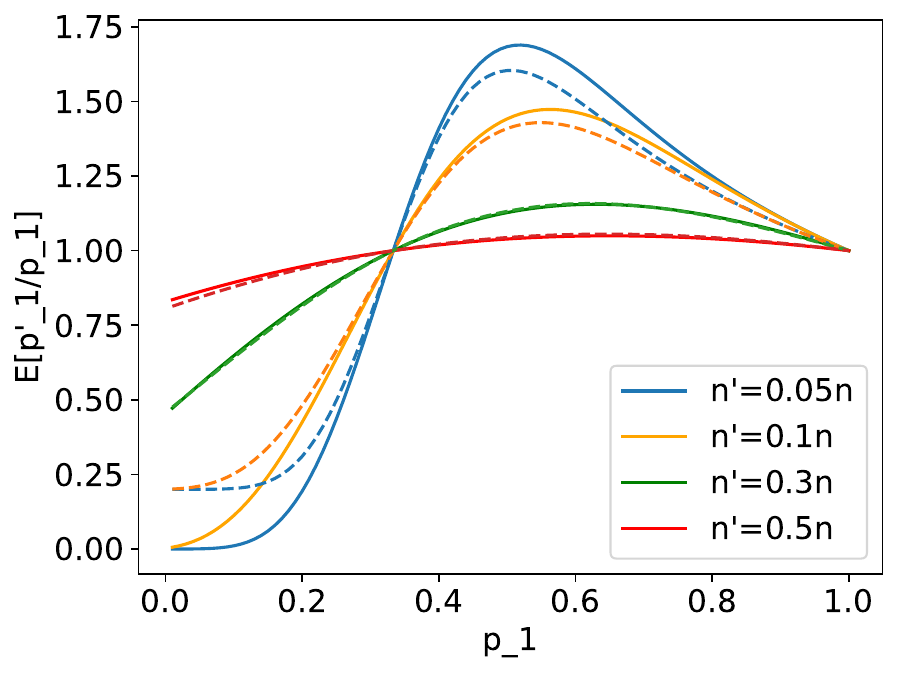}&\includegraphics[width=50mm]{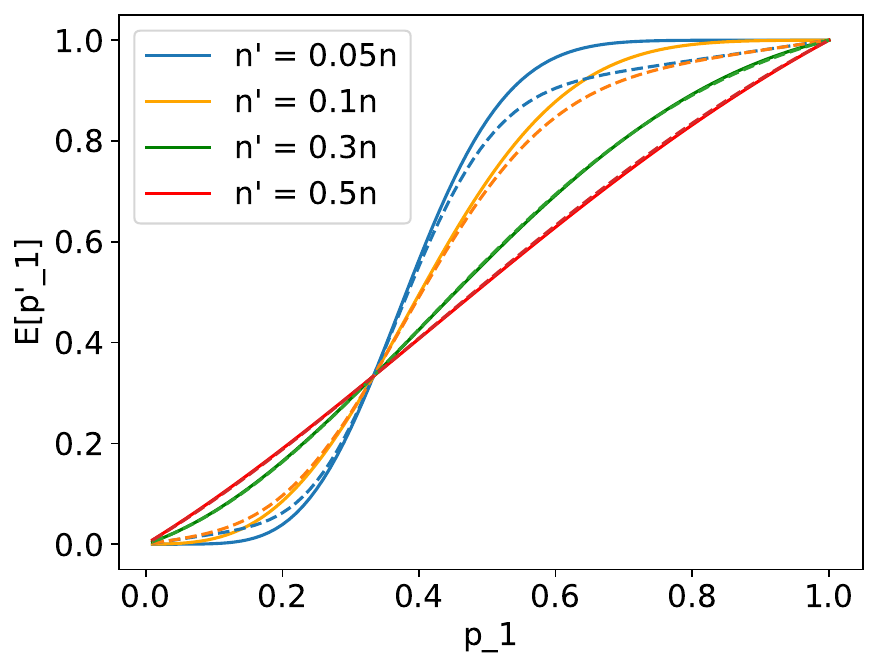}\\
(c) $\mathbb{E}[\frac{p'_1}{p_1}]$ vs. $p_1$, $c=3$ & (d) ${\mathbb{E}[p'_1]}$ vs. $p_1$, $c=3$\\[2pt]
\end{tabular}
\caption{ $\mathbb{E}[p'_1]$ and $\mathbb{E}[\frac{p'_1}{p_1}]$ against $p_1$ at different reduction rate $\gamma$ for $c=2$ and $c=3$. The dashed lines represent trends with Node Fidelity Preservation (NFP), and the solid lines represent trends without NFP. }
\label{fig::p1}
\end{figure}

We demonstrate the effect of Node Fidelity Preservation with examples. We assign $n=1000$, $c=2, 3$, and $b=\lfloor n'/5 \rfloor$. 
we plot change of $\mathbb{E}[\frac{p'_1}{p_1}]$ and $\mathbb{E}[p'_1]$ at different $n'$ and $p_1$ separately in Figure \ref{fig::p1}. We draw the trend with Node Fidelity Preservation (NFP) using dash lines and without NFP using solid lines. From the figure, we observe that without Node Fidelity Preservation being applied, the ratio $\mathbb{E}[\frac{p'_1}{p_1}]$ approaches zero when $n'$ is small, resulting in a vanishing minority class. The application of Node Fidelity Preservation prevents the ratio from approaching zero and makes sure class 1 won't disappear when $p_1$ is small.

\subsection{Node Representation Proximity}
\label{proof-spectral}

Spectral-based methods aim to preserve the spectral properties of a graph. Specifically, the Laplacian matrices of the original graph and the reduced graph are compared with each other~\cite{Loukas2018}. The combinatorial Laplacian of a graph $\mathcal{G}$, $L \in \mathbb{R}^{n\times n}$ is defined as
$L = D-A$, 
where $A\in \mathbb{R}^{n\times n}$ is the adjacency matrix of $\mathcal{G}$, and $D\in \mathbb{R}^{n\times n}$ is its diagonal degree matrix. The combinatorial Laplacian of the reduced graph, $L' \in \mathbb{R}^{n'\times n'}$, is calculated as 
$L' = P^{\mp}LP^{+}$,
where $P^+$ is the pseudo inverse of the normalized coarsening matrix $P\in\mathbb{R}^{n\times n'}$, and $P^{\mp}$ is the transposed pseudo inverse of $P$. Since $L$ and $L'$ have different sizes, they can not be directly compared. Instead, the following induced semi-norms are defined:
\begin{equation}
    \| x\|_{L} = \sqrt{x^{\top}Lx}, ~
    \| x'\|_{L'} = \sqrt{x'^{\top}L'x'},
\end{equation}
where $x\in \mathbb{R}^n$, and $x'$ is the projection of $x$ on $n'$-space such that $x'=Px$. The closeness between $L$ to $L'$ can be defined as how close $\| x\|_{L}$ is to  $\| x'\|_{L'}$. $L$ and $L'$ are considered equivalent if it holds
$
    \| x'\|_{L'} = \| x\|_{L}$
for any $x\in \mathbb{R}^n$. We next show that a partitioning made by merging nodes sharing the same neighborhood structure results in an equivalent Laplacian of the reduced graph as the original graph. \newline

 \begin{theorem}

 \emph{Let $i$ and $j$ be two nodes of $\mathcal{G}$,  $A\in\mathbb{R}^{n\times n}$ be the adjacency matrix of $\mathcal{G}$ and $\tilde{A} = A+I$, $D$ be the diagonal matrix of $A$ and $\tilde{D} = D+I$,  $P\in\mathbb{R}^{n\times {n-1}}$ be the normalized coarsening matrix by merging $i$ and $j$ to one node, $L$ and $L'$ be the combinatorial Laplacian of $G$ and the reduced graph. It holds that
    $\tilde{A}_i =  \tilde{A}_j, \tilde{D}_i =  \tilde{D}_j\Rightarrow \| x'\|_{L'} = \| x\|_{L}$
for any $x\in \mathbb{R}^n$ and $x'=Px$, where $A_i$ is the $i$-th row for a matrix A. }
\end{theorem}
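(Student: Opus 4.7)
The plan is to reduce the spectral identity to a single algebraic fact: under the twin hypothesis, $u := e_i - e_j$ is an eigenvector of $L$. First I would unpack the hypothesis. Since $\tilde A = A + I$, reading $\tilde A_i = \tilde A_j$ componentwise forces $A_{ij} = A_{ji} = 1$ (so $i$ and $j$ are adjacent) and $A_{ik} = A_{jk}$ for every $k \notin \{i,j\}$; the condition $\tilde D_i = \tilde D_j$ then collapses to $D_{ii} = D_{jj} =: d$, which is in fact automatic from summing the previous identity. So the assumption says that $i$ and $j$ are adjacent twins with the same external neighborhood.

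The main technical step is the coordinatewise verification that $Lu = (d+1)u$: at $k \notin \{i,j\}$ the twin identity $A_{ki} = A_{kj}$ kills $L_{ki} - L_{kj}$, at $k = i$ one gets $L_{ii} - L_{ij} = d + 1$, and at $k = j$ the symmetric $-(d+1)$. This is where the hypothesis is actually used. Next I would exploit the very simple structure of $P$: only the cluster $\{i,j\}$ is non-singleton, so a short calculation yields $M := PP^\top = I - \tfrac{1}{2}uu^\top$. Hence $M$ is the orthogonal projector onto $\operatorname{range}(P)$ (the cluster-constant subspace), $I - M$ is the rank-one projector onto $\operatorname{span}(u)$, and every $x \in \mathbb{R}^n$ splits as $x = Mx + (I-M)x$ with $\langle Mx, u\rangle = (Mx)_i - (Mx)_j = 0$. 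Combining this orthogonality with the eigenvector identity gives $\langle Mx, L(I-M)x\rangle = 0$, and therefore $\|x\|_L^2 = \|Mx\|_L^2 + \|(I-M)x\|_L^2$.

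Finally, I would identify $\|x'\|_{L'}^2 = \|Mx\|_L^2$: since $P$ has orthonormal columns, $P^+ = P^\top$, so unwinding $L' = P^{\mp}LP^+$ as the induced quadratic form on the coarse space gives $\|x'\|_{L'}^2 = (P^\top x)^\top (P^\top L P)(P^\top x) = x^\top MLM x = (Mx)^\top L (Mx)$. The claim then collapses to the vanishing of the residual $\|(I-M)x\|_L^2 = \tfrac{(d+1)(x_i - x_j)^2}{2}$. The subtle point I expect to be the main obstacle is interpreting the phrase ``for any $x \in \mathbb{R}^n$'': the residual vanishes precisely on $\operatorname{range}(P) = \{x : x_i = x_j\}$, so in the standard coarsening-theory convention the equality should be read as holding on signals that are compatible with the partition. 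The real content of the theorem is then that the twin hypothesis is exactly what makes $u$ an $L$-eigenvector, and thereby makes $\|\cdot\|_L$ split orthogonally along $P$.
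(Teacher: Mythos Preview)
Your approach is sound and genuinely different from the paper's. The paper does not use the eigenvector/orthogonal-splitting argument at all; instead it specializes to $n=3$, writes out $P$, $P^+$, $P^{\mp}$ explicitly, expands both quadratic forms $x^\top Lx$ and $x'^\top L'x'$ as polynomials in $x_1,x_2,x_3$, and then tries to match coefficients using the twin hypothesis. Your route via $u=e_i-e_j$ being an $L$-eigenvector and the decomposition $\|x\|_L^2=\|Mx\|_L^2+\|(I-M)x\|_L^2$ is cleaner, works for arbitrary $n$, and makes the structure transparent.

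More importantly, the concern you flag is not a subtlety of interpretation but an actual error in the paper. The paper asserts that $\tilde A_i=\tilde A_j$ implies $A_i=A_j$ and hence $l_{11}=l_{12}=l_{21}=l_{22}$; this is false, since the diagonal shift forces $A_{ij}=A_{ji}=1$ but leaves $A_{ii}=A_{jj}=0$, so in fact $l_{ii}=l_{jj}=d$ while $l_{ij}=l_{ji}=-1$. If one redoes the paper's $3$-node computation with the correct substitutions, the two quadratic forms do \emph{not} agree: one gets
\[
\|x\|_L^2-\|x'\|_{L'}^2=\tfrac{d+1}{2}(x_i-x_j)^2,
\]
which is exactly your residual $\|(I-M)x\|_L^2$. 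So the theorem as literally stated (``for any $x\in\mathbb{R}^n$'') is false; equality holds precisely on the cluster-constant subspace $\{x:x_i=x_j\}=\operatorname{range}(P^+)$, as you suspected. Your decomposition isolates this residual cleanly, whereas the paper's coefficient-matching hides the discrepancy behind the incorrect step $l_{11}=l_{12}$. Your write-up should state the corrected claim explicitly rather than leave it as an interpretive remark.
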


\begin{proof}
For simplicity, we assume the original graph has 3 nodes.  We assume $i=1$ and $j=2$ are permutation-invariant. It is easy to show that the following proof can be extended to the case when the graph has more than 3 nodes. 
The coarsening matrix $P$ to merge the first two nodes is then:
\begin{align}
P = \begin{bmatrix}
\frac{1}{2} & \frac{1}{2} & 0\\
0 & 0 & 1
\end{bmatrix},
\end{align}
then the pseudo inverse $P^{+}$ and the transpose
pseudo inverse $P^{\mp}$ of $P$ is
 \[
P^{+} = \begin{bmatrix} 
    1 & 0\\
    1 & 0\\
    0 & 0 
    \end{bmatrix}
\qquad
P^{\mp} = \begin{bmatrix} 
    1 & 1  & 0\\
    0 & 0 & 1
    \end{bmatrix}. 
\]
Given $L$ as the combinatorial Laplacian of the original graph, the combinatorial Laplacian $L'$ of the reduced graph is 
\begin{align}
L' = P^{\mp}LP^{+},
\end{align}
and for a $x=[x_1, x_2, x_3]^\top$, the corresponding $x'$ is computed as:
\begin{align}
x' = Px = [(x_1+x_2)/2, x_3]^\top,
\end{align}
Then 
\begin{align}
x^{\top}Lx &= (l_{21} + l_{12}) x_1 x_2 + (l_{31} + l_{13}) x_1 x_3 \\&+  (l_{32} + l_{23}) x_2 x_3 + l_{11}x^2_1 + l_{22} x^2_2 + l_{33} x^2_3, 
\end{align}

\begin{align}
 x'^{\top}L'x' &= \frac{1}{2}(l_{11} + l_{12} + l_{21} + l_{22}) x_1 x_2 \\&+ \frac{1}{2}(l_{13} + l_{23} + l_{31} + l_{32}) x_1 x_3 \\&+ \frac{1}{2}(l_{13} + l_{23} + l_{31} + l_{32}) x_2 x_3 \\&+ \frac{1}{4}(l_{11} + l_{12} + l_{21} + l_{22}) x^2_1 \\&+ \frac{1}{4}(l_{11} + l_{12} + l_{21} + l_{22}) x^2_2 + l_{33} x^2_3. 
\end{align}
Given 
$\tilde{A}_1 = \tilde{A}_2, \tilde{D}_1= \tilde{D}_2 \Rightarrow {A}_1 = {A}_2, {D}_1= {D}_2$ and $L=D-A$, 
it is straightforward to prove that $l_{1a} = l_{2a}$ and $l_{b1} = l_{b2}$ which means $l_{11}=l_{12}=l_{21}=l_{22}$  and $l_{13}=l_{23}, l_{31}=l_{32}$. Thus, 
\begin{align}
x^{\top}Lx = x'^{\top}L'x'\\
 \Rightarrow \sqrt {x^{\top}Lx} = \sqrt{x'^{\top}L'x'}\\
 \Rightarrow \| x'\|_{L'} = \| x\|_{L}. 
\end{align}
\end{proof}

\subsection{Proof of the size of the reduced graph}
\label{proof-1}

\begin{proof}
The number of nodes of the reduced graph at task $t$ is:
\begin{align}
n &= (1-\gamma)(...((1-\gamma)n_1 + n_2) ...+n_t)\nonumber\\
&\leq (1-\gamma)(...((1-\gamma)n_{\text{MAX}} + n_{\text{MAX}})...+n_{\text{MAX}})\nonumber\\
&= ((1-\gamma) + (1-\gamma)^2 + ... + (1-\gamma)^t)n_\text{MAX}\nonumber\\
&\leq \frac{1-\gamma}{\gamma}n_{\text{MAX}}
\end{align}
\end{proof}

\subsection{Discussion on Utilizing Softlabel as an Alternative to Majority Voting}
{
Another potential solution to address the ``vanishing class'' problem caused by the majority voting is to use softlabel to represent the cluster label instead.  However, such an approach may come with several drawbacks. First, using hard labels is memory-efficient, and requires only a single digit to represent a sample's label. In contrast, soft labels use a vector for storage, with a size equivalent to the model's output dimension. This distinction in memory usage is negligible for models with few output classes. However, in scenarios where the model predicts among a large number of classes, the increased memory demand of soft labels becomes significant and cannot be overlooked.  
Secondly, although a model can learn to predict a soft label during the training phase, most applications necessitate deterministic predictions in the testing or inference phase. We are concerned that training with soft labels might lead the model towards indeterministic and ambiguous predictions, potentially undermining its practical applicability. 
The last concern is that when using soft labels, instead of a classification task, the model is performing a regression task. To predict single nodes with deterministic labels, it is considered a suboptimal approach to model it as a regression task due to the unaligned optimization objectives and loss functions. Also, regression is believed to be a less difficult task to learn compared to classification task as discrete optimization is generally harder than continuous optimization. Training the model with an easier task may restrict its performance during the test phase.  }

\section{Supplemental experiment setups}
\subsection{Details of the datasets}
\label{dataset}
The Kindle dataset contains items from  the Amazon Kindle store; each node representing an item is associated with a timestamp indicating its release date, and each edge indicates a ``frequently co-purchased'' relation between items.
The DBLP and ACM datasets are citation datasets where each node represents a paper associated with its publishing date, and a node's connection to other nodes indicates a citation relation. For the Kindle dataset, we select items from six categories: \emph{Religion \& Spirituality}, \emph{Children's eBooks}, \emph{Health, Fitness \& Dieting}, \emph{SciFi \& Fantasy}, \emph{Business \& Money}, and \emph{Romance}. 
For the DBLP dataset, we select papers published in 34 venues and divide them into four classes: \emph{Database}, \emph{Data Mining}, \emph{AI}, and \emph{Computer Vision}.  For the ACM dataset, we select papers published in 66 venues and divide them into four classes: \emph{Information Systems}, \emph{Signal Processing}, \emph{Applied Mathematics}, and \emph{AI}. For each of the datasets, we select nodes from a specified time period. 
We build a graph and make a constraint that the timestamp of the target node is not allowed to be larger than the source node, which should naturally hold for citation datasets as one can not cite a paper published in the future.
Then we split each graph into subgraphs by edges based on the timestamps of the source nodes. To simulate a real-life scenario that different tasks may have different sets of class labels, at each task for the Kindle dataset, we randomly select one or two classes and mask the labels of the nodes from selected class(s) during the training and the test phase; for the DBLP and ACM datasets, we randomly select one class and mask the labels of the nodes from the selected class during the training and the test phase. The summary of each dataset is provided in Table~\ref{tab:stat}.

\begin{table}
\caption{Statistics of datasets. ``Interval'' indicates the length of the time interval for each task. \\``\# Item'' indicates the total number of items/papers in each dataset.} 
\footnotesize
\label{sum-data}
\begin{center}
\begin{tabular}{lccccc}
\toprule
\textbf{Dataset}  & \textbf{Time} & \textbf{Interval} & \textbf{\#Task}  & \textbf{\#Class} & \textbf{\#Items} \\
\midrule
Kindle &2012-2016&1 year&5&6&38,450\\
DBLP &1995-2014&2 years&10&4&54,265\\
ACM &1995-2014&2 years&10&4&77,130\\
\bottomrule
\end{tabular}
\end{center}
\label{tab:stat}
\end{table} 

\subsection{Hyper-parameter setting}
\label{hyperparam}

For each task, we randomly split all nodes into training, validation, and test sets with the ratio of $30/20/50$. For the baseline CGL models the memory strengths are searched from $\{10^i|i\in[-5...5]\}$ or $\{0.1,0.2...0.9\}$.
For baselines that utilize a memory buffer, we calibrate their memory buffer sizes to ensure that their memory usage is on a similar scale to that of \model{}. 
For \model{}, by default the reduction ratio is 0.5;  memory buffer size for Node Fidelity Preservation is 200; node degree is used to determine a node's importance score, and Reservoir Sampling is chosen as the node sampling strategy. We chose GCN and GAT as the GNN backbones. For both of them, we set the number of layers as 2 and the hidden layer size as 48. For GAT, we set the number of heads as 8. For each dataset, we generate 10 random seeds that split the nodes into training, validation, and test sets and select the masked class. We run all models on the same random seeds and the results are averaged over 10 runs. All experiments were conducted on a NVIDIA GeForce RTX 3090 GPU.

\section{Additional results}
\subsection{Main results}\label{appendix::main-results}
We present more results of the performance of \model{} and other baselines on three datasets and three backbone GNN models, including GCN (Table ~\ref{tab::gcn}), GAT (Table ~\ref{tab::gat}), and GIN (Table ~\ref{tab::gin}).
The results show that our proposed method outperforms other continual learning approaches consistently with different GNN backbone models.
\setlength{\tabcolsep}{2pt} 

\begin{table*}[ht]
\caption {Performance comparison of node classification in terms of F1 and BACC with GCN on three datasets (average over 10 trials). Standard deviation is denoted after $\pm$.} 
\label{tab::gcn}
\small
\centering
\begin{tabular}{ cl cc cc cc c cc cc cc}
 \toprule
 \textbf{Dataset} & \textbf{Method} &F1-AP (\%)&F1-AF (\%)& BACC-AP (\%)&BACC-AF (\%)\\
 \cmidrule{1-4} \cmidrule{5-6}\cmidrule{7-8} \cmidrule{9-13} 
&joint train&87.21$\pm$0.55&0.45$\pm$0.25&80.98$\pm$0.55&1.17$\pm$0.82\\
\cmidrule{2-6}
&finetune&69.10$\pm$10.85&18.99$\pm$11.19&67.12$\pm$5.17&16.01$\pm$6.02\\
\cmidrule{2-6}
&simple-reg&68.80$\pm$10.02&18.21$\pm$10.49&66.85$\pm$4.20&14.79$\pm$5.09\\
&EWC&77.08$\pm$8.37&10.87$\pm$8.62&72.13$\pm$4.97&10.75$\pm$6.05\\
&TWP&78.90$\pm$4.71&8.99$\pm$4.93&73.16$\pm$2.94&9.62$\pm$3.75\\
&OTG&69.01$\pm$10.55&18.94$\pm$10.79&66.69$\pm$4.77&16.26$\pm$5.77\\
Kindle&GEM&76.08$\pm$6.70&11.01$\pm$7.27&71.77$\pm$2.71&10.13$\pm$4.14\\
&ERGNN-rs&77.63$\pm$3.61&9.64$\pm$4.19&70.87$\pm$3.26&10.83$\pm$4.23\\
&ERGNN-rb&75.87$\pm$6.41&11.46$\pm$6.98&71.44$\pm$2.64&10.98$\pm$3.64\\
&ERGNN-mf&77.28$\pm$5.91&10.15$\pm$6.31&72.23$\pm$3.71&10.27$\pm$4.74\\
&DyGrain&69.14$\pm$10.47&18.88$\pm$10.72&66.44$\pm$5.36&16.48$\pm$6.40\\
&IncreGNN&69.45$\pm$10.34&18.48$\pm$10.66&67.07$\pm$5.12&15.86$\pm$5.86\\
&SSM&78.99$\pm$3.13&8.19$\pm$3.63&72.95$\pm$2.60&8.72$\pm$2.79\\
&SSRM&77.37$\pm$4.06&9.99$\pm$4.55&71.46$\pm$3.46&10.55$\pm$4.33\\
&\model{}&\textbf{82.97$\pm$2.05}&\textbf{4.91$\pm$1.90}&\textbf{76.43$\pm$2.53}&\textbf{6.31$\pm$2.50}\\
\cmidrule{2-6}
\cmidrule{1-6}
&joint train&86.33$\pm$1.38&0.77$\pm$0.13&80.55$\pm$1.45&1.28$\pm$0.55\\
\cmidrule{2-6}
&finetune&67.85$\pm$8.05&20.43$\pm$7.07&65.76$\pm$3.28&18.26$\pm$1.85\\
\cmidrule{2-6}
&simple-reg&69.70$\pm$9.16&18.69$\pm$8.48&66.99$\pm$3.94&17.29$\pm$2.99\\
&EWC&79.38$\pm$4.86&8.85$\pm$4.11&73.22$\pm$4.47&10.69$\pm$3.90\\
&TWP&80.05$\pm$3.71&8.23$\pm$3.28&73.79$\pm$3.41&10.44$\pm$3.53\\
&OTG&68.24$\pm$10.12&20.12$\pm$9.34&65.70$\pm$4.98&18.62$\pm$3.75\\
DBLP&GEM&80.04$\pm$3.24&7.90$\pm$2.68&73.01$\pm$4.58&10.66$\pm$4.31\\
&ERGNN-rs&78.02$\pm$5.79&10.08$\pm$5.16&71.70$\pm$5.11&12.32$\pm$4.71\\
&ERGNN-rb&75.16$\pm$7.24&12.85$\pm$6.54&70.26$\pm$5.16&13.79$\pm$4.72\\
&ERGNN-mf&77.42$\pm$5.25&10.64$\pm$4.38&72.26$\pm$4.21&11.90$\pm$3.52\\
&DyGrain&67.52$\pm$10.88&20.83$\pm$10.16&65.82$\pm$4.39&18.34$\pm$3.47\\
&IncreGNN&69.40$\pm$9.60&18.92$\pm$8.75&66.85$\pm$4.86&17.41$\pm$3.53\\
&SSM&82.71$\pm$1.76&4.20$\pm$1.26&76.05$\pm$2.01&6.21$\pm$1.68\\
&SSRM&77.43$\pm$5.34&10.66$\pm$4.47&71.20$\pm$4.97&12.76$\pm$4.27\\
&\model{}&\textbf{84.60$\pm$2.01}&\textbf{2.51$\pm$1.03}&\textbf{79.63$\pm$2.16}&\textbf{3.02$\pm$1.81}\\
\cmidrule{2-6}
\cmidrule{1-6}
&joint train&75.35$\pm$1.49&1.87$\pm$0.60&66.01$\pm$1.36&2.35$\pm$0.99\\
\cmidrule{2-6}
&finetune&60.53$\pm$9.35&19.09$\pm$9.23&56.55$\pm$3.29&15.75$\pm$2.62\\
\cmidrule{2-6}
&simple-reg&61.63$\pm$10.09&17.83$\pm$9.99&57.81$\pm$2.24&14.64$\pm$1.57\\
&EWC&66.48$\pm$6.43&12.73$\pm$6.26&60.70$\pm$2.00&11.57$\pm$1.87\\
&TWP&65.98$\pm$7.26&13.33$\pm$6.94&60.19$\pm$2.50&12.09$\pm$2.31\\
&OTG&61.45$\pm$9.94&18.33$\pm$9.86&57.29$\pm$2.96&15.07$\pm$1.90\\
ACM&GEM&67.17$\pm$4.24&11.69$\pm$3.94&59.00$\pm$3.30&12.12$\pm$2.21\\
&ERGNN-rs&64.82$\pm$7.89&14.43$\pm$7.68&58.95$\pm$2.83&12.94$\pm$2.14\\
&ERGNN-rb&63.58$\pm$8.82&15.66$\pm$8.71&58.34$\pm$2.68&13.76$\pm$2.32\\
&ERGNN-mf&64.80$\pm$8.49&14.59$\pm$8.41&59.79$\pm$2.04&13.00$\pm$1.15\\
&DyGrain&61.40$\pm$9.57&18.47$\pm$9.50&57.23$\pm$3.14&15.39$\pm$2.08\\
&IncreGNN&61.32$\pm$9.70&18.42$\pm$9.64&56.97$\pm$3.23&15.47$\pm$2.37\\
&SSM&68.77$\pm$2.93&9.50$\pm$2.47&59.32$\pm$3.08&10.80$\pm$2.60\\
&SSRM&64.39$\pm$7.43&14.72$\pm$7.48&58.78$\pm$2.26&12.93$\pm$1.84\\
&\model{}&\textbf{70.96$\pm$2.68}&\textbf{8.02$\pm$2.33}&\textbf{63.98$\pm$1.67}&\textbf{7.45$\pm$2.18}\\
 \bottomrule
\end{tabular}

\end{table*}

\setlength{\tabcolsep}{2pt} 
\begin{table*}[ht]
\caption {Performance comparison of node classification in terms of F1 and BACC with GAT on three datasets (average over 10 trials). Standard deviation is denoted after $\pm$.} 
\label{tab::gat}
\small
\centering
\begin{tabular}{ cl cc cc cc c cc cc cc}
 \toprule
 \textbf{Dataset} & \textbf{Method} &F1-AP (\%)&F1-AF (\%)& BACC-AP (\%)&BACC-AF (\%)\\
 \cmidrule{1-4} \cmidrule{5-6}\cmidrule{7-8} \cmidrule{9-13} 

&joint train&88.54$\pm$0.70&0.35$\pm$0.27&82.71$\pm$1.02&0.62$\pm$0.46\\
\cmidrule{2-6}
&finetune&68.68$\pm$11.55&20.05$\pm$11.59&66.89$\pm$4.90&16.91$\pm$5.32\\
\cmidrule{2-6}
&simple-reg&66.20$\pm$10.89&19.26$\pm$11.18&64.61$\pm$3.98&15.24$\pm$4.48\\
&EWC&78.95$\pm$5.62&9.92$\pm$5.79&73.49$\pm$2.76&10.30$\pm$3.33\\
&TWP&78.17$\pm$6.67&10.85$\pm$6.71&73.23$\pm$3.06&10.78$\pm$3.58\\
&OTG&70.09$\pm$9.66&18.78$\pm$10.02&67.50$\pm$4.24&16.53$\pm$4.65\\
Kindle&GEM&76.46$\pm$7.14&11.86$\pm$7.61&72.52$\pm$2.56&10.97$\pm$3.49\\
&ERGNN-rs&78.64$\pm$4.36&9.56$\pm$4.57&72.19$\pm$2.95&10.70$\pm$3.20\\
&ERGNN-rb&75.60$\pm$7.14&12.66$\pm$7.34&71.86$\pm$3.16&11.55$\pm$3.32\\
&ERGNN-mf&78.14$\pm$5.22&10.35$\pm$5.59&72.94$\pm$3.05&10.90$\pm$3.89\\
&DyGrain&70.65$\pm$10.05&18.28$\pm$10.44&68.16$\pm$4.05&15.77$\pm$4.89\\
&IncreGNN&70.66$\pm$10.57&18.18$\pm$10.76&68.06$\pm$4.50&15.92$\pm$5.06\\
&SSM&81.84$\pm$2.10&6.58$\pm$2.59&74.64$\pm$3.10&8.81$\pm$2.62\\
&SSRM&78.09$\pm$4.54&10.20$\pm$5.15&71.99$\pm$2.46&11.17$\pm$3.39\\
&\model{}&\textbf{83.66$\pm$1.93}&\textbf{4.69$\pm$1.82}&\textbf{76.58$\pm$3.07}&\textbf{6.34$\pm$2.13}\\
\cmidrule{2-6}
\cmidrule{1-6}
&joint train&83.43$\pm$1.81&1.08$\pm$0.31&76.97$\pm$1.94&1.79$\pm$0.64\\
\cmidrule{2-6}
&finetune&65.75$\pm$10.67&21.68$\pm$9.76&64.21$\pm$4.21&18.76$\pm$3.28\\
\cmidrule{2-6}
&simple-reg&68.85$\pm$9.68&18.49$\pm$8.58&66.21$\pm$4.01&16.81$\pm$2.75\\
&EWC&76.33$\pm$5.71&11.12$\pm$5.02&71.05$\pm$3.83&12.16$\pm$3.41\\
&TWP&76.64$\pm$4.47&10.61$\pm$3.77&70.95$\pm$3.22&12.03$\pm$2.98\\
&OTG&67.50$\pm$10.70&20.06$\pm$9.90&65.65$\pm$4.40&17.57$\pm$3.63\\
DBLP&GEM&73.64$\pm$6.07&12.76$\pm$4.77&67.53$\pm$4.47&14.42$\pm$3.24\\
&ERGNN-rs&75.36$\pm$5.62&11.87$\pm$4.62&70.07$\pm$3.88&12.95$\pm$3.44\\
&ERGNN-rb&71.65$\pm$7.32&15.20$\pm$6.34&67.16$\pm$3.86&15.37$\pm$3.07\\
&ERGNN-mf&74.62$\pm$6.16&12.55$\pm$5.29&68.97$\pm$4.20&13.90$\pm$3.61\\
&DyGrain&65.83$\pm$10.05&21.62$\pm$8.96&63.80$\pm$4.18&19.10$\pm$2.71\\
&IncreGNN&66.23$\pm$11.16&21.23$\pm$10.52&64.62$\pm$4.10&18.55$\pm$3.41\\
&SSM&81.47$\pm$2.48&4.46$\pm$1.59&74.62$\pm$2.41&6.48$\pm$1.69\\
&SSRM&74.31$\pm$6.00&12.63$\pm$4.68&69.15$\pm$4.18&13.57$\pm$3.41\\
&\model{}&\textbf{81.63$\pm$1.06}&\textbf{2.29$\pm$0.60}&\textbf{76.08$\pm$1.67}&\textbf{2.53$\pm$1.09}\\
\cmidrule{2-6}
\cmidrule{1-6}
&joint train&74.89$\pm$1.53&1.91$\pm$0.85&65.81$\pm$1.40&2.32$\pm$1.33\\
\cmidrule{2-6}
&finetune&61.59$\pm$10.85&17.79$\pm$10.44&58.05$\pm$2.18&14.24$\pm$1.91\\
\cmidrule{2-6}
&simple-reg&59.22$\pm$9.95&17.79$\pm$9.81&55.88$\pm$2.90&14.23$\pm$1.57\\
&EWC&66.75$\pm$7.00&12.20$\pm$6.93&61.49$\pm$1.84&10.44$\pm$1.75\\
&TWP&67.42$\pm$7.54&11.66$\pm$7.35&61.57$\pm$1.72&10.29$\pm$1.86\\
&OTG&62.24$\pm$10.88&16.99$\pm$11.02&58.55$\pm$2.37&13.70$\pm$1.41\\
ACM&GEM&67.01$\pm$4.61&11.16$\pm$4.47&59.53$\pm$2.44&10.68$\pm$2.06\\
&ERGNN-rs&64.89$\pm$7.93&13.90$\pm$7.88&59.22$\pm$2.34&12.29$\pm$1.73\\
&ERGNN-rb&64.04$\pm$8.59&14.74$\pm$8.64&58.70$\pm$2.48&13.10$\pm$1.84\\
&ERGNN-mf&64.56$\pm$9.26&14.30$\pm$9.24&59.94$\pm$1.78&11.91$\pm$1.46\\
&DyGrain&61.66$\pm$10.58&17.72$\pm$10.42&58.15$\pm$2.49&14.07$\pm$1.80\\
&IncreGNN&62.25$\pm$10.70&17.22$\pm$10.60&58.37$\pm$2.41&14.01$\pm$1.55\\
&SSM&69.83$\pm$3.16&8.10$\pm$2.81&61.15$\pm$2.65&8.93$\pm$2.77\\
&SSRM&64.77$\pm$8.34&14.21$\pm$8.93&59.47$\pm$1.97&12.23$\pm$1.72\\
&\model{}&\textbf{70.37$\pm$2.70}&\textbf{7.64$\pm$2.43}&\textbf{62.59$\pm$1.51}&\textbf{7.52$\pm$1.94}\\
 \bottomrule
\end{tabular}
\end{table*} 

\begin{table*}[ht]
\caption {Performance comparison of node classification in terms of F1 and BACC  with GIN on three datasets (average over 10 trials). Standard deviation is denoted after $\pm$.} 
\label{tab::gin}
\small
\centering
\begin{tabular}{ cl cc cc cc c cc cc cc}
 \toprule
 \textbf{Dataset} & \textbf{Method} &F1-AP (\%)&F1-AF (\%)& BACC-AP (\%)&BACC-AF (\%)\\
 \cmidrule{1-4} \cmidrule{5-6}\cmidrule{7-8} \cmidrule{9-13} 

&joint train&84.39$\pm$0.84&0.55$\pm$0.34&77.17$\pm$0.83&1.45$\pm$1.07\\
\cmidrule{2-6}
&finetune&64.98$\pm$10.26&20.24$\pm$10.77&61.81$\pm$4.53&17.91$\pm$5.24\\
\cmidrule{2-6}
&simple-reg&65.04$\pm$10.74&20.04$\pm$11.13&62.95$\pm$4.16&16.62$\pm$5.14\\
&EWC&75.73$\pm$4.74&9.36$\pm$5.01&69.34$\pm$2.84&10.07$\pm$3.47\\
&TWP&76.14$\pm$4.35&9.14$\pm$4.72&69.12$\pm$3.06&10.49$\pm$3.51\\
&OTG&65.18$\pm$9.97&20.08$\pm$10.29&61.98$\pm$4.79&17.70$\pm$5.35\\
Kindle&GEM&72.10$\pm$6.29&12.40$\pm$6.80&66.76$\pm$2.92&12.05$\pm$3.67\\
&ERGNN-rs&73.67$\pm$3.22&10.83$\pm$3.84&66.72$\pm$3.00&11.84$\pm$3.40\\
&ERGNN-rb&70.25$\pm$7.35&14.39$\pm$8.01&65.53$\pm$3.14&13.70$\pm$4.38\\
&ERGNN-mf&72.33$\pm$6.10&12.47$\pm$6.64&67.12$\pm$3.17&12.28$\pm$4.27\\
&DyGrain&64.50$\pm$10.18&20.78$\pm$10.74&61.55$\pm$4.45&18.13$\pm$5.36\\
&IncreGNN&65.38$\pm$9.56&19.69$\pm$9.99&61.71$\pm$5.02&17.85$\pm$5.66\\
&SSM&76.47$\pm$3.37&8.01$\pm$3.38&67.78$\pm$2.84&10.73$\pm$3.03\\
&SSRM&73.75$\pm$3.25&10.79$\pm$3.56&66.55$\pm$3.25&12.07$\pm$3.63\\
&\model{}&\textbf{78.71$\pm$1.76}&\textbf{6.44$\pm$1.80}&\textbf{70.76$\pm$1.86}&\textbf{8.28$\pm$2.13}\\
\cmidrule{2-6}
\cmidrule{1-6}
&joint train&84.42$\pm$1.47&1.63$\pm$0.28&77.69$\pm$1.07&2.65$\pm$0.68\\
\cmidrule{2-6}
&finetune&65.48$\pm$11.96&22.85$\pm$11.28&64.59$\pm$4.86&19.65$\pm$3.87\\
\cmidrule{2-6}
&simple-reg&66.84$\pm$9.64&21.85$\pm$8.97&65.04$\pm$3.77&19.76$\pm$2.80\\
&EWC&77.45$\pm$7.06&10.79$\pm$6.54&71.57$\pm$5.18&12.59$\pm$4.68\\
&TWP&77.59$\pm$4.91&10.64$\pm$4.44&71.45$\pm$4.11&12.75$\pm$3.81\\
&OTG&66.37$\pm$10.66&22.11$\pm$9.93&64.62$\pm$4.19&19.77$\pm$2.96\\
DBLP&GEM&78.71$\pm$4.45&9.10$\pm$3.47&72.24$\pm$3.86&11.39$\pm$3.27\\
&ERGNN-rs&76.63$\pm$3.94&11.47$\pm$3.29&70.45$\pm$3.51&13.67$\pm$3.18\\
&ERGNN-rb&73.23$\pm$7.29&14.77$\pm$6.50&68.79$\pm$4.52&15.22$\pm$3.91\\
&ERGNN-mf&75.96$\pm$5.74&12.14$\pm$4.99&70.74$\pm$3.55&13.39$\pm$3.10\\
&DyGrain&66.89$\pm$10.10&21.35$\pm$9.36&65.06$\pm$3.77&19.10$\pm$2.51\\
&IncreGNN&67.81$\pm$9.09&20.56$\pm$8.34&65.58$\pm$3.66&18.68$\pm$2.79\\
&SSM&80.21$\pm$7.85&6.74$\pm$7.29&73.96$\pm$5.16&8.24$\pm$4.99\\
&SSRM&76.60$\pm$4.89&11.55$\pm$4.30&70.89$\pm$4.29&13.30$\pm$3.90\\
&\model{}&\textbf{84.03$\pm$2.08}&\textbf{3.12$\pm$0.89}&\textbf{78.09$\pm$2.19}&\textbf{3.48$\pm$1.34}\\
\cmidrule{2-6}
\cmidrule{1-6}
&joint train&71.86$\pm$1.54&2.99$\pm$0.74&62.65$\pm$1.17&2.98$\pm$1.60\\
\cmidrule{2-6}
&finetune&57.20$\pm$9.03&20.31$\pm$8.97&53.50$\pm$3.43&16.14$\pm$2.57\\
\cmidrule{2-6}
&simple-reg&57.86$\pm$9.27&19.52$\pm$9.02&53.99$\pm$3.41&15.95$\pm$2.21\\
&EWC&65.18$\pm$5.86&12.06$\pm$5.68&58.64$\pm$1.78&10.90$\pm$1.84\\
&TWP&65.45$\pm$5.56&11.72$\pm$5.35&58.76$\pm$1.78&10.63$\pm$1.60\\
&OTG&58.24$\pm$9.38&19.37$\pm$9.24&54.23$\pm$3.35&15.46$\pm$2.31\\
ACM&GEM&65.03$\pm$3.71&11.69$\pm$3.15&56.94$\pm$3.43&11.65$\pm$2.29\\
&ERGNN-rs&61.30$\pm$7.75&15.77$\pm$7.57&55.87$\pm$2.68&13.30$\pm$2.11\\
&ERGNN-rb&61.12$\pm$8.25&16.10$\pm$8.07&55.82$\pm$2.60&13.81$\pm$1.74\\
&ERGNN-mf&61.86$\pm$7.85&15.49$\pm$7.73&56.76$\pm$2.84&13.10$\pm$1.56\\
&DyGrain&58.09$\pm$9.46&19.43$\pm$9.27&54.26$\pm$3.06&15.36$\pm$2.45\\
&IncreGNN&58.21$\pm$9.17&19.43$\pm$9.03&54.10$\pm$3.15&15.78$\pm$2.03\\
&SSM&65.73$\pm$3.15&10.64$\pm$2.79&56.81$\pm$2.58&11.19$\pm$3.01\\
&SSRM&61.47$\pm$7.38&15.68$\pm$7.14&56.09$\pm$2.64&13.02$\pm$1.55\\
&\model{}&\textbf{67.19$\pm$3.12}&\textbf{9.73$\pm$2.80}&\textbf{59.06$\pm$2.40}&\textbf{9.13$\pm$2.92}\\
 \bottomrule
\end{tabular}

\end{table*} 
\clearpage
\newpage
\subsection{Graph coarsening methods}
\label{appendix::coarsening}
We present the result of the performance of \model{} with its graph coarsening module \coarsen{} replaced by other four widely used graph coarsening algorithms with GCN as the backbone GNN model in Table~\ref{tab::coarsen-complete}.

\begin{table*}[ht]
\caption {Coarsen runtime and node classification results of \model{} variations with different coarsening methods on three datasets with GCN (average over 10 trials).  Boldface indicates the best result of each column.} 
\label{tab::coarsen-complete}
\small
\centering
\begin{tabular}{ cl cc c cc c cc }
 \toprule
 \textbf{Dataset} & \textbf{Method}&Time (s)&F1-AP (\%)&F1-AF (\%)& BACC-AP (\%)&BACC-AF (\%)\\
 \cmidrule{1-7} 

&Alge. JC&8.9&81.09$\pm$2.15&6.88$\pm$2.37&74.48$\pm$2.70&8.20$\pm$2.98\\
&Aff. GS &65.6&77.42$\pm$4.44&10.57$\pm$4.60&71.43$\pm$4.02&11.47$\pm$4.64\\
Kindle&Var. neigh.&6.9&80.77$\pm$3.66&7.31$\pm$4.02&74.01$\pm$1.76&8.78$\pm$2.45\\
&Var. edges&10.1&82.29$\pm$1.95&5.63$\pm$2.17&75.13$\pm$2.67&7.54$\pm$2.93\\
&FGC&7.7&81.42$\pm$2.98&6.66$\pm$3.50&74.56$\pm$2.08&8.19$\pm$2.74\\
&\coarsen{}&\textbf{2.3}&\textbf{82.97$\pm$2.05}&\textbf{4.91$\pm$1.90}&\textbf{76.43$\pm$2.53}&\textbf{6.31$\pm$2.50}\\
\cmidrule{1-7}
&Alge. JC&70.76&\textbf{85.33$\pm$1.19}&2.26$\pm$0.85&78.32$\pm$2.31&4.69$\pm$2.35\\
&Aff. GS &237.1&84.34$\pm$1.70&3.68$\pm$1.68&77.15$\pm$3.31&6.87$\pm$3.76\\
DBLP&Var. neigh.&7.3&84.91$\pm$1.38&2.53$\pm$0.88&79.18$\pm$1.66&3.59$\pm$1.98\\
&Var. edges&28.0&85.13$\pm$1.86&\textbf{2.08$\pm$0.91}&\textbf{79.92$\pm$1.61}&\textbf{2.50$\pm$0.88}\\
&FGC&10.8&84.77$\pm$1.69&2.68$\pm$1.05&78.74$\pm$1.80&4.07$\pm$1.53\\
&\coarsen{}&\textbf{1.1}&84.60$\pm$2.01&2.51$\pm$1.03&79.63$\pm$2.16&3.02$\pm$1.81\\
\cmidrule{1-7}
&Alge. JC&11.8&70.25$\pm$3.23&9.00$\pm$3.16&64.06$\pm$1.90&7.69$\pm$2.35\\
&Aff. GS &96.1&68.12$\pm$5.69&11.43$\pm$5.66&62.58$\pm$2.14&9.56$\pm$2.47\\
ACM&Var. neigh.&10.3&66.83$\pm$7.53&12.71$\pm$7.50&62.25$\pm$2.15&10.04$\pm$2.52\\
&Var. edges&13.8&\textbf{71.42$\pm$2.48}&\textbf{7.57$\pm$2.28}&\textbf{64.94$\pm$1.62}&\textbf{6.57$\pm$1.99}\\
&FGC&7.0&66.97$\pm$7.44&12.52$\pm$7.43&62.70$\pm$2.01&9.48$\pm$2.49\\
&\coarsen{}&\textbf{1.4}&70.96$\pm$2.68&8.02$\pm$2.33&63.98$\pm$1.67&7.45$\pm$2.18\\
 \bottomrule
\end{tabular}
\end{table*}

\begin{figure*}[htbp]
\centerline{\includegraphics[scale=0.6]{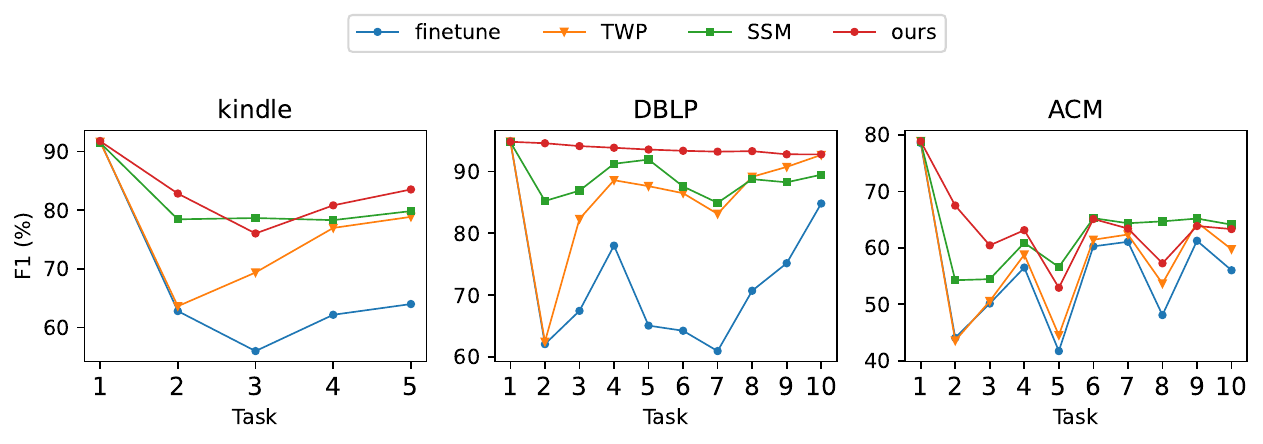}}
\caption{F1 score on the test set of the first task on Kindle, DBLP, and  ACM, after training on more tasks. } 
\label{fig:first-task}
\end{figure*} 

\begin{figure*}[htbp]
\begin{tabular}{cccc}
  \includegraphics[width=33mm]{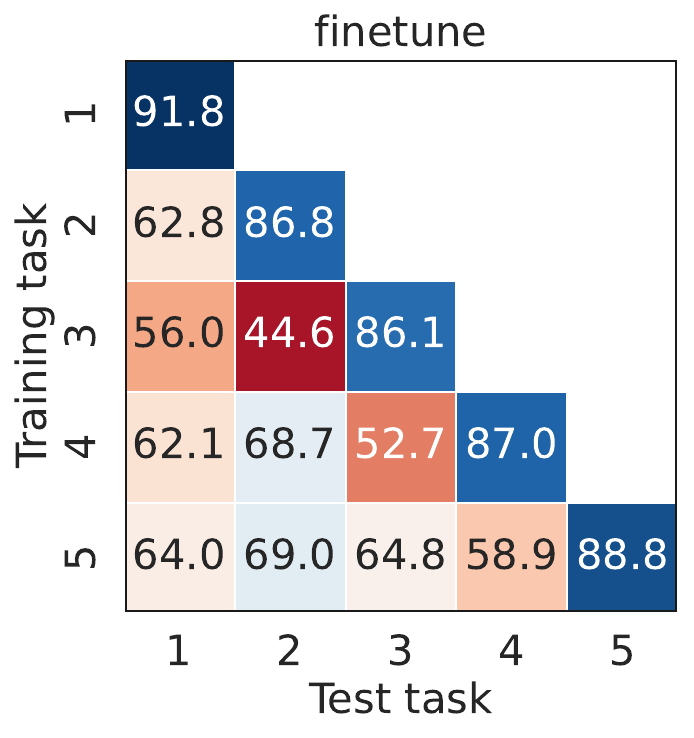}&\includegraphics[width=33mm]{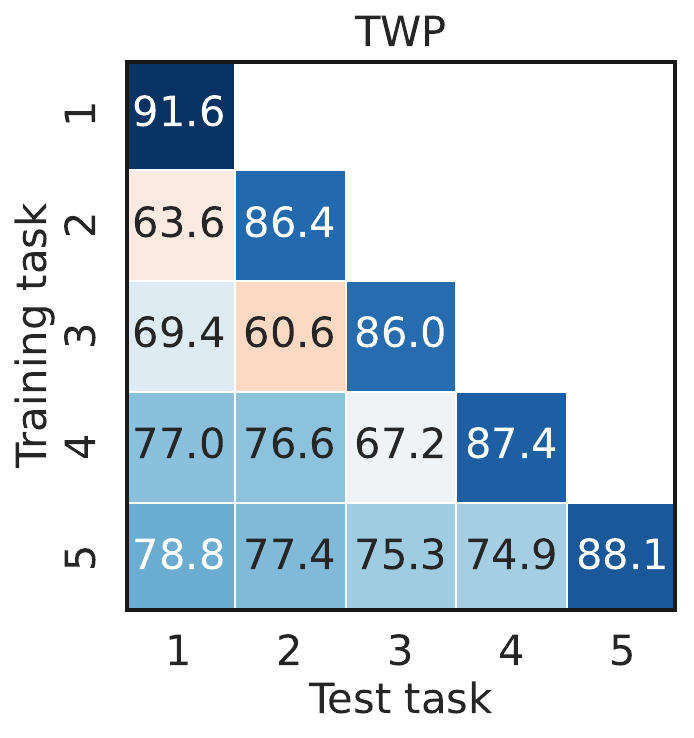} &
  \includegraphics[width=33mm]{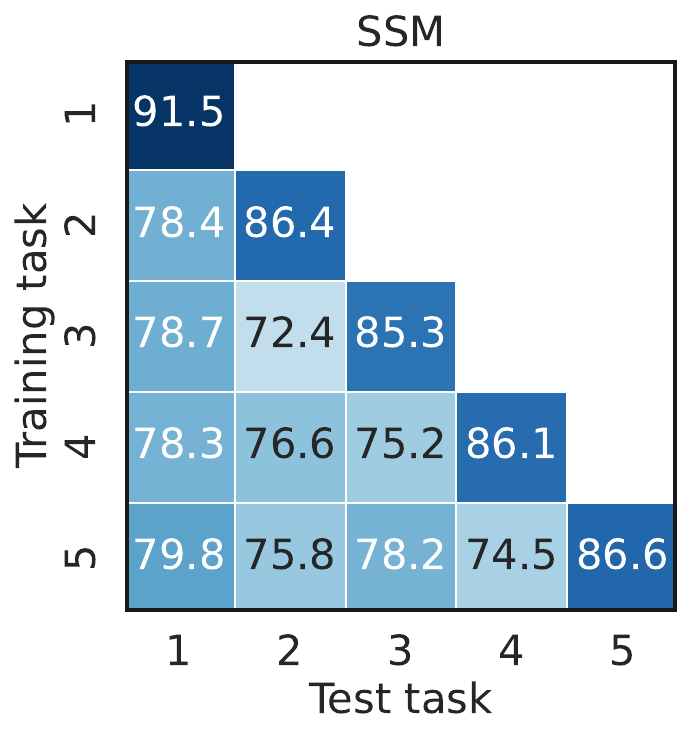}&\includegraphics[width=33mm]{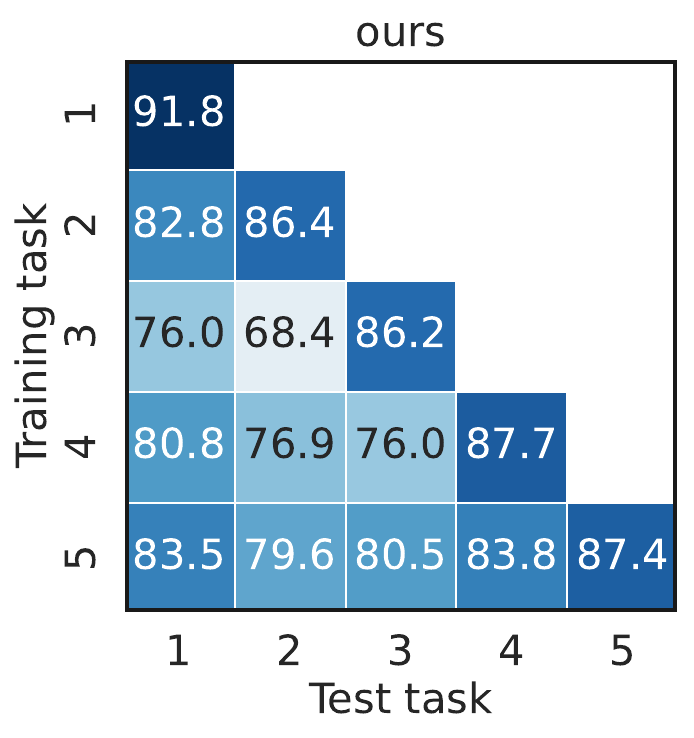} 
\end{tabular}
\caption{F1 score on the test set (x-axis) after training on each task (y-axis) on Kindle dataset. } 
\label{fig:all-task}
\end{figure*}

\section{Additional ablation studies and analysis}
\label{additional}
\subsection{Performance after training on each task}
\label{first-task}
We first investigate the performance of the model on the first task when more tasks are learned with different CGL frameworks. We report the AP-F1 of the representative baselines on the three datasets in Figure \ref{fig:first-task}. It shows that with our proposed framework, the model forgets the least at the end of the last task. Also, we observe that the performance of experience replay-based methods (SSM and \model{}) are more stable through learning more tasks, but the regularization-based method, TWP, experiences more fluctuations. We deduce that this is caused by the fact that regularization-based methods can better prevent the model from drastically forgetting previous tasks even when the current task has more distinct distributions. We also observe a general pattern on the ACM dataset that the F1 score on the first task goes up after training tasks 6 and 7. Our guess is that tasks 6 and 7 share similar patterns with task 1, while the knowledge learned from task 6 and 7 is useful in predicting task 1 and thus contributes to a ``positive backward transfer''. 

We also visualize the model's performance in terms of AP-F1 using the four approaches on all previous tasks after training on each task on Kindle dataset, as shown in Figure \ref{fig:all-task}. It demonstrates that the application of different CGL methods can alleviate the catastrophic forgetting problem on the Kindle dataset as we point out in the introduction part to varying degrees.

\begin{figure}
\centering
\centerline{\includegraphics[scale=0.6]{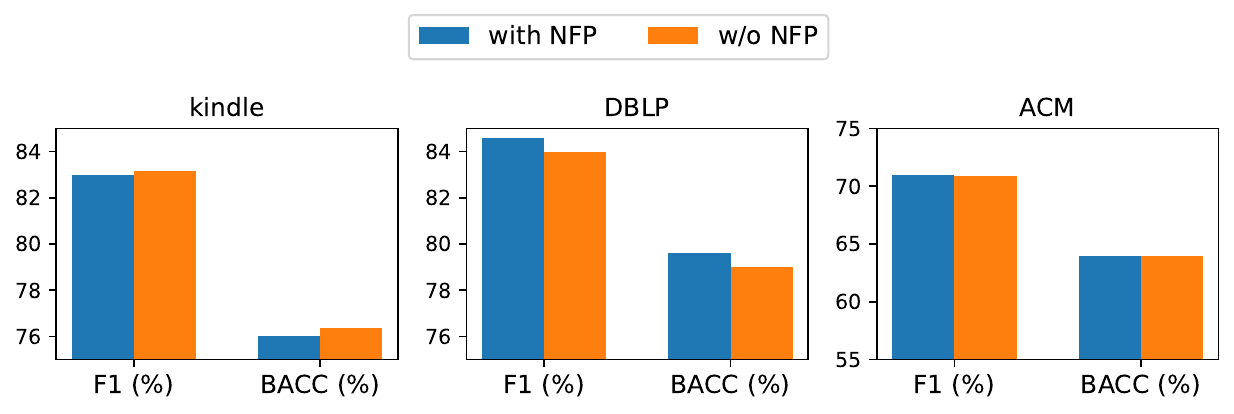}}
\caption{Average performance of the \model{} with and without Node Fidelity Preserving (NFP). 
}
\label{fig:nfp}
\end{figure}

\begin{figure}[t]
\centering
\centerline{\includegraphics[scale=0.6]{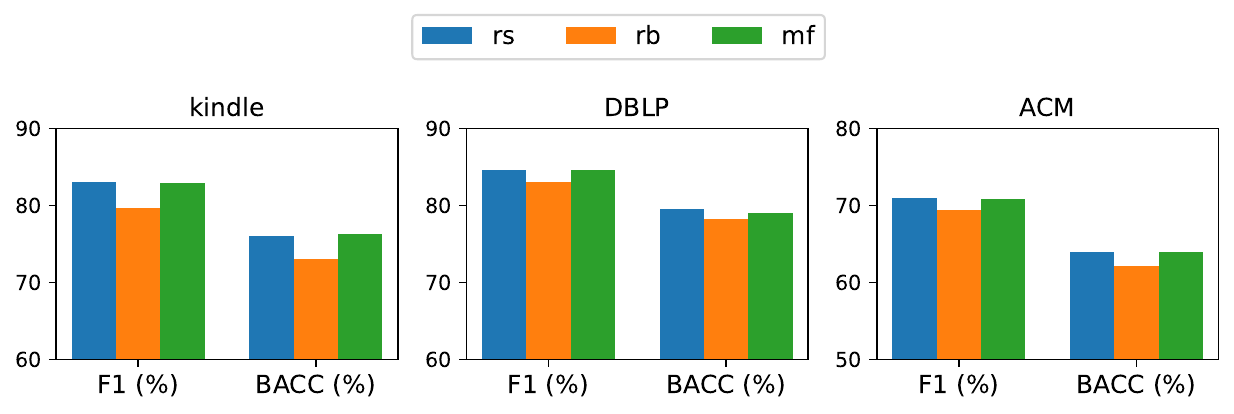}}
\caption{Average performance of the \model{} with different node sampling strategies: Reservior-Sampling (rs), Ring Buffer (rb), and Mean Feature (mf). }
\label{fig:mb}
\end{figure}

\subsection{Short-term forgetting}
{The average forgetting (AF) measures the decline in model performance after learning all tasks, which only captures the assess the model's long-term forgetting behavior. To evaluate the short-term forgetting of the model, we introduce a new metric, termed "short-term average forgetting" (AF-st), which measure the decline in model performance on the most recent task when it learns a new one: 
\begin{align*}
  \text{AF-st} = \frac{1}{T} \sum_{j=2}^T{{a_{j-1,j-1}-a_{j,j-1}}}, 
\end{align*}
where $T$ is the total number of task, and $a_{i,j}$ is the prediction metric of model on test set of task $j$ after it is trained on task $i$. We report the AF-st in terms of F1 score with GCN on three datasets in Table \ref{tab::af-st}.}
 
\begin{table}
\caption {The averaged short-term forgetting in terms of F1 score (\%) with GCN as the backbone on three datasets (averaged over 10 trials). } 
\small
\centering
\begin{tabular}{ lcccccc }
 \toprule
  {\textbf{Method}}& \multicolumn{2}{c}{\textbf{Kindle}} &\multicolumn{2}{c}{\textbf{DBLP}} &\multicolumn{2}{c}{\textbf{ACM}}\\ 
\midrule

joint train&&0.33&&0.37&&0.69\\
\midrule
finetune&&33.15&&22.25&&19.57\\
\midrule
simple-reg&&29.91&&19.39&&19.07\\
EWC&&22.92&&13.34&&15.68\\
TWP&&21.25&&14.19&&15.47\\
OTG&&32.89&&20.67&&19.45\\
GEM&&17.43&&9.25&&10.95\\
ERGNN-rs&&10.42&&8.07&&10.23\\
ERGNN-rb&&15.78&&9.30&&12.72\\
ERGNN-mf&&13.95&&8.72&&13.72\\
DyGrain&&32.76&&20.52&&19.67\\
IncreGNN&&32.85&&21.42&&19.68\\
SSM&&12.18&&5.15&&10.26\\
SSRM&&9.68&&8.32&&10.51\\
\model{}&&10.26&&0.18&&6.44\\
  \bottomrule

\end{tabular}

\label{tab::af-st}
\end{table} 

\subsection{Shorter time interval}
{We investigate the performance of \model{} and other baselines when each task is assigned with a shorter time interval.
For the DBLP and ACM datasets, we divided them into one-year time intervals, resulting in a total of 20 tasks. We have included the AP-f1 and AF-f1 scores of all baselines utilizing GCN as their backbone in Table \ref{tab::res-shorter}. Our findings indicate that, compared to our previous dataset splitting approach, most CGL methods exhibit a slight decline in performance, but TACO continues to outperform the other baseline models. }

\begin{table}
\caption {Node classification performance with GCN as the backbone on two datasets (averaged over 10 trials) with shorter time intervals and more tasks. Standard deviation is denoted after $\pm$.} 

\small
\centering
\begin{tabular}{ l cccc c cccc}
 \toprule
  & \multicolumn{4}{c}{\textbf{DBLP}} && \multicolumn{4}{c}{\textbf{ACM}}\\ 
    \cmidrule{2-5}\cmidrule{7-10} 
  \textbf{Method} &\multicolumn{2}{c}{F1-AP(\%)} & \multicolumn{2}{c}{F1-AF (\%)}&& \multicolumn{2}{c}{F1-AP (\%)}& \multicolumn{2}{c}{F1-AF (\%)}\\
  \midrule
joint train&84.38&$\pm$1.49&1.60&$\pm$0.22&&73.70&$\pm$0.71&3.04&$\pm$0.54\\
\cmidrule{1-10}
finetune&66.05&$\pm$11.30&22.45&$\pm$10.62&&60.16&$\pm$8.58&19.56&$\pm$8.92\\
\midrule
simple-reg&67.25&$\pm$7.80&21.50&$\pm$6.86&&58.44&$\pm$7.62&21.10&$\pm$7.87\\
EWC&79.04&$\pm$6.15&8.84&$\pm$5.64&&66.85&$\pm$4.66&11.77&$\pm$4.72\\
TWP&79.35&$\pm$5.83&8.56&$\pm$5.41&&66.52&$\pm$4.50&12.15&$\pm$4.64\\
OTG&67.45&$\pm$8.31&21.03&$\pm$7.33&&60.28&$\pm$8.18&19.54&$\pm$8.38\\
GEM&79.43&$\pm$3.66&8.41&$\pm$2.44&&67.76&$\pm$3.01&10.58&$\pm$3.32\\
ERGNN-rs&75.08&$\pm$6.32&13.13&$\pm$5.48&&61.43&$\pm$7.76&17.64&$\pm$8.16\\
ERGNN-rb&71.85&$\pm$7.55&16.46&$\pm$6.51&&61.23&$\pm$7.67&18.09&$\pm$7.72\\
ERGNN-mf&74.24&$\pm$6.50&13.94&$\pm$5.37&&63.13&$\pm$6.61&16.22&$\pm$6.83\\
DyGrain&67.96&$\pm$9.19&20.58&$\pm$8.35&&61.12&$\pm$8.14&18.51&$\pm$8.45\\
IncreGNN&66.19&$\pm$7.88&22.34&$\pm$7.31&&60.53&$\pm$8.42&19.08&$\pm$8.70\\
SSM&82.08&$\pm$1.99&4.37&$\pm$1.12&&67.22&$\pm$1.95&10.63&$\pm$2.30\\
SSRM&75.35&$\pm$6.14&12.95&$\pm$5.19&&62.11&$\pm$7.61&17.09&$\pm$7.91\\
\model{}&\textbf{83.06}&$\pm$\textbf{2.25}&\textbf{4.18}&$\pm$\textbf{1.69}&&\textbf{68.31}&$\pm$\textbf{3.21}&\textbf{10.17}&$\pm$\textbf{3.63}\\

  \bottomrule

\end{tabular}

\label{tab::res-shorter}
\end{table} 

\subsection{Efficiency Analysis}
{
We analyze the efficiency of \model{} and other experience-replay-based CGL baselines in terms of training time and memory usage. We report the averaged total training time (including the time to learn model parameters and the time to store the memory/coarsen graphs), and the averaged memory usage of the model (including the memory to store data for current task and the memory buffer to store information of previous tasks) for each task in Table \ref{tab::efficiency}. We find that on average, SSM uses less memory than \model{} on the Kindle dataset. However, on the DBLP and ACM datasets, SSM's memory usage is either more or similar. It's important to note that SSM maintains a sparsified graph that expands as additional tasks are introduced. As a result, SSM's memory continues to grow with the increasing number of tasks. In contrast, \model{}, with its dynamic coarsening algorithm, consistently maintains a relatively stable memory regardless of the number of tasks. }

\begin{table}
\caption {The averaged total training time (s) and the averaged memory usage (MB) for each task of experience-replay-based methods.} 
\small
\centering
\begin{tabular}{ l cc c cc c cc}
 \toprule
  & \multicolumn{2}{c}{\textbf{Kindle}} & & \multicolumn{2}{c}{\textbf{DBLP}} && \multicolumn{2}{c}{\textbf{ACM}}\\ 
    \cmidrule{2-3}\cmidrule{5-6} \cmidrule{8-9}
  \textbf{Method} &{Time (s)} & {Memory (MB)}&& {Time (s)} & {Memory (MB)}&&{Time (s)} & {Memory (MB)}\\
  \midrule
  ERGNN-rs&1.62&49.5&&1.70&38.0&&1.71&121.1\\
  ERGNN-rb&0.49&48.5&&0.49&37.5&&0.58&119.4\\
  ERGNN-mf&1.25&48.5&&1.32&37.6&&1.69&119.5\\
  DyGrain&0.45&47.9&&0.55&37.1&&0.495&118.1\\
  IncreGNN&0.44&47.9&&0.46&37.1&&0.51&118.1\\
  SSM&0.89&53.9&&1.32&48.3&&1.35&144.1\\
  SSRM&0.55&50.0&&0.54&38.4&&0.61&122.6\\
  \model{}&2.74&59.0&&1.50&41.9&&1.71&144.4\\
  \bottomrule

\end{tabular}

\label{tab::efficiency}
\end{table} 

\subsection{Node Fidelity Preservation}
\label{nfp}
We investigate the effectiveness of the Node Fidelity Preservation strategy by removing it from \model{} and compare the average performance of the variant of \model{} with its default version. We report the average performance of the model on the three datasets in Figure~\ref{fig:nfp}. We observe that on DBLP dateset, the application of Node Fidelity Preservation improves the performance, while on the other two datasets, the model performs comparably or marginally better without Node Fidelity Preservation. Note that our proposal of Node Fidelity Preservation is intended as a preventative measure, not as a means of enhancing model performance. Node Fidelity Preservation aims to prevent the situation where minority classes vanish from the reduced graph due to unbalanced class distribution. Therefore, the improvement may not be noticeable if the class distribution is relatively balanced and node degradation is insignificant. In such cases, preserving the selected nodes may prevent the graph from coarsening to the optimized structure, which could even make the performance worse

\subsection{Important node sampling strategies}
\label{important-node}
We investigate how different important node sampling strategies affect the performance of the model. We report the average node classification performance of \model{} with different node sampling strategies, Reservior-Sampling,  Ring Buffer, and Mean Feature on the three datasets in Figure~\ref{fig:mb}. It shows that \model{} achieves better performance with Reservior-Sampling and Mean Feature. We deduce that it is caused by the fact that Ring Buffer operates on a first in, first out (FIFO) basis, that it only retains the most recent samples for each class, making it fail to preserve information from the distant past.

\end{document}